\def\eqref#1{equation~\ref{#1}}
\def\1{\bm{1}}
\DeclareMathAlphabet{\mathsfit}{\encodingdefault}{\sfdefault}{m}{sl}
\SetMathAlphabet{\mathsfit}{bold}{\encodingdefault}{\sfdefault}{bx}{n}
\DeclareMathOperator*{\argmin}{arg\,min}
\newcommand*{\circled}[1]{\lower.7ex\hbox{\tikz\draw (0pt, 0pt)%
    circle (.5em) node {\makebox[1em][c]{\small #1}};}}
\theoremstyle{definition}
\newtheorem{definition}{Definition}
\theoremstyle{lemma}
\newtheorem{lemma}{Lemma}
\theoremstyle{theorem}
\newtheorem{theorem}{Theorem}
\theoremstyle{claim}
\theoremstyle{remark}
\newtheorem*{remark}{Remark}
\begin{document}
\title{Generalizable Information Theoretic Causal Representation}
\author{Mengyue Yang$^{1}$\footnotemark[1], Xinyu Cai$^{4}$, Furui Liu$^{2}$, Xu Chen$^{3}$, Zhitang Chen$^{2}$, Jianye Hao$^{2}$, Jun Wang$^{1}$ \\ $^1$University College London $^2$Noah's Ark Lab, Huawei \\ $^3$Renmin University of China $^4$ Nanyang Technological University}
\renewcommand{\thefootnote}{\fnsymbol{footnote}}
\footnotetext[1]{Email to: \url{mengyue.yang.20@ucl.ac.uk}}

\maketitle
\begin{abstract}
It is evidence that representation learning can improve model's performance over multiple downstream tasks in many real-world scenarios, such as image classification and recommender systems. Existing learning approaches rely on establishing the correlation (or its proxy) between features and the downstream task (labels), which typically  results in a representation containing cause, effect and spurious correlated variables of the label. 
Its generalizability may deteriorate because of the unstability of the non-causal parts. 
In this paper, we propose to learn causal representation from observational data by regularizing the learning procedure with mutual information measures according to our hypothetical causal graph. The optimization involves a counterfactual loss, based on which we deduce a theoretical guarantee that the causality-inspired learning is with reduced sample complexity and better generalization ability. Extensive experiments show that the models trained on causal representations learned by our approach is  robust under adversarial attacks and distribution shift. 

\end{abstract}

\section{Introduction}
Learning representations from purely observations concerns the problem of finding a low-dimensional, compact representation which is beneficial to  prediction models for multiple downstream tasks. It is widely applied in many real-world applications like recommendation system, searching system etc.\citep{sun2018recurrent, okura2017embedding, zhang2017joint, shi2018heterogeneous}. Generally, the feature representation learning is to map the collected features into a representation space, which is then leveraged for training a prediction model. Most of the representation learning methods are designed based on the information of correlation between the feature and downstream labels only. Some recent works \citep{scholkopf2021toward, peters2017elements, zhou2021domain} reckon that the representation produced by this schema is not able to achieve robustness and generalization, since spurious correlated features miss essential information which actually influence the system \citep{pearl2009causality}. 
For example, obtaining thermometer enables the prediction of the air temperature, but manually putting the thermometer into hot water does not change the air temperature. It is obvious that thermometer scale is not always a stable feature to predict temperature, because non-causal relations exist. Therefore, the causal information rather than correlation is more robust and general for prediction models.





Causal representation learning is an effective approach for extracting invariant, cross-domain stable causal information, which is believed to be able to improve sample efficiency by understanding the underlying generative mechanism from observational data \citep{scholkopf2021toward, DBLP:journals/advcs/AyP08}. Recently, some approaches learn the causal representation making use of certain causal property provided by the specified priors. One perspective is to learn feature representations with its structure specified by causal models, assuming that the raw data contains both cause variables and effect variables, like causal disentangled representation learning form images \citep{trauble2020independence, yang2021causalvae, shen2020disentangled, suter2019robustly, dittadi2020transfer} and physical environments \citep{ahmed2020causalworld, sontakke2021causal}. The representations have good interpretability, but probably suffer from redundancy because the learning process may generate some parts unrelated to the downstream tasks. Another perspective is to utilize the causal relation between features and labels \citep{suter2019robustly, kilbertus2018generalization, scholkopf2012causal, wang2021desiderata}. They are interpreted as causal or anti-causal learning, tackling cases such as the causal direction between observational feature set $\mathbf{X}$ and downstream labels $\mathbf{Y}$ is $\mathbf{X}\rightarrow \mathbf{Y}$ and finding the cause information for downstream prediction. This only covers a set of simple cases like handwritten numeral recognition, where the  handwritten numeral $\mathbf{X}$ is produced by the number $\mathbf{Y}$ in brains. However, its effectivity is not guaranteed in circumstances where causal graph is complex and unknown previously.

\begin{figure*}[t]
\begin{center}
\centerline{\includegraphics[width=1\columnwidth]{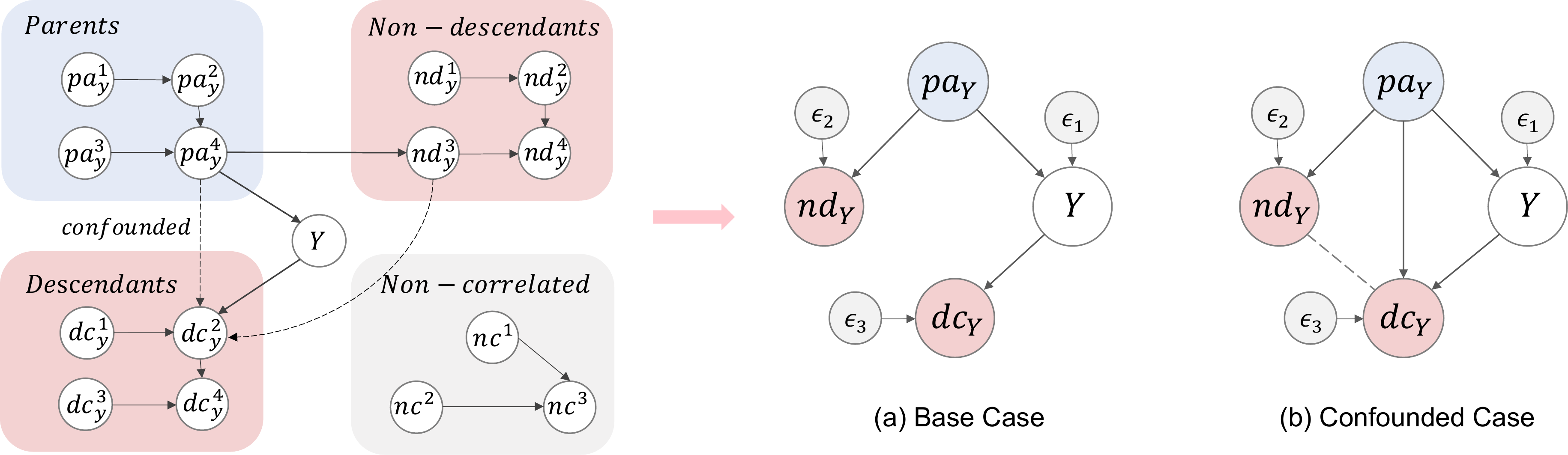}}
\caption{The figure demonstrates a typical case of a causal system, and we extend to two more general settings: (a) the causal graph without confounders, (b) the causal graph with confounders.}
\label{fig:intro}
\end{center}
\vspace{-5mm}
\end{figure*}

In this paper, we formalize the real-world causal system as Fig.\ref{fig:intro} shows. The observational data $\mathbf{X} = \mathbf{h}(\mathbf{pa_Y}, \mathbf{nd_Y},  \mathbf{dc_Y},\mathbf{nc_Y})$ is generated by a mixture of the minimal sufficient parent nodes $\mathbf{pa_Y}$, non-descendants $\mathbf{nd_Y}$, descendants $\mathbf{dc_Y}$ and uncorrelated features $\mathbf{nc_Y}$ of $\mathbf{Y}$. In general, we treat the causal system from information theoretic perspective by considering the natural data generative process as an information propagation along the causal graph. We deal with the general problem of learning minimal sufficient cause information $\mathbf{pa_Y}$ for downstream predictions. {To learn a good representation $\mathbf{Z=pa_Y}$ from $\mathbf{X}$, we propose an approach named \textbf{CaRR} that consists of  two parts: basic mutual information objective and counterfactual estimation of probability of necessary and sufficient (PNS).} 
As for the first part, we derive a tractable mutual information causal representation learning objective from the perspective of Data Processing Inequality (DPI).
About the second part, we introduce an objective to find a robust {representation of} $\mathbf{pa_Y}$. We take a perspective of adversarial attack, and propose an counterfactual vulnerable (CV) term to quantify the necessary and sufficient representation of causal information. In addition, we theoretically analyze the generalization ability under finite sample by information theory, and provide a condition that $\mathbf{Z}=\mathbf{pa_Y}$ is the minimizer of the finite-sample upper bound of a downstream prediction error over all possible solutions in the space of $\mathbf{Z}$. Our theory provides an insight that sample efficiency will be improved if causal information is obtained.

\textbf{Our Contribution}

(i) We propose an information-theoretical approach to learn causal representation from observational data, in the formalization of an explicit causal graphical model to describe the data generative process of the  real-world system.

(ii) We propose a novel quantification of the causal effect of the representation on the downstream labels by measuring the interventional vunlerability, based on which a robust learning approach is proposed correspondingly to optimize our model for causality-inspired learning. This integrates the concept of intervention in causality with the domain of representation learning. 

(iii) We theoretically analyze the sample efficiency of the learning approach by giving a generalization error bound with respect to finite sample size. The theorem depicts a quantitative link between the amount of causal information contained in the learned representation, and the sample complexity of the model on downstream tasks.

(iv) Comprehensive experiments to verify the merits of method are conducted, including testing cases for the model's generalization ability when adversarial attack in representation space  and distribution shift on dataset exist.
\section{Related Works}
Causal Representation Learning is a set of approaches to find reusable causal information and causal mechanism from observational data. Motivated by learning generalizable and stable information from data,  causal representation learning approaches from several different perspectives have been proposed in literature.  Under the framework of corss-domain learning, the pioneering work \citep{zhou2021domain, wang2021desiderata,shen2021towards} consider the heterogeneity across multiple domains under the out-of-distribution settings \citep{gong2016domain, li2018domain, magliacane2017domain, zhang2015multi}. They learn causal representations from observational data by invariant causal mechanism across multi-domains. Leveraging and combining the idea of causal structure learning,  some works use structural causal models to describe causal relationship inside the mixed observational data \citep{ yang2021causalvae, shen2020disentangled} and perform learning by minimizing a loss containing a structure learning part. On the other hand, based on the recently proposed independence between cause and mechanism principle (ICM), several work focus on the 
assymmetry between cause and effect. \citep{sontakke2021causal, steudel2010causal} use the asymmetrical Komolgorov Complexity \citep{janzing2010causal, cover1999elements} relationship between cause and effect for causal learning, and similar ideas are utilized by \citep{parascandolo2018learning, steudel2010causal}. Different from ICM, our paper considers the relationship between cause and effect from the perspective of mutual information \cite{DBLP:conf/icml/BelghaziBROBHC18, DBLP:conf/icml/ChengHDLGC20} since cause information will reduce along the causal generative process (i.e., a phenomenon known as data processing inequality \citep{kullback1997information, cover1999elements}). We put our attentions on the generalization ability of the causal representation. Most existing causal inference in generalization focuses on distribution shift setting \citep{vapnik1999overview, rojas2018invariant, meinshausen2018causality, peters2016causal}. \cite{arjovsky2019invariant} aim at finding representation containing invariant information and analyze the generalization ability of method. Different with them, in our paper, we theoretically consider the generalization ability of causal representation under finite sample in probability approximate correctly view (PAC) \citep{shalev2014understanding, shamir2010learning}, and demonstrate that causal representation has low generalization error, which also supports the claim that causal information is sample efficient.



\section{Preliminaries}
\textbf{Structural Causal Models}.\citep{pearl2009causality}
In causality, the causal graphical model is represented quantitatively by a functional model named Structural Causal Models (SCMs). The variable $Y$ is represented as a function of its parent variable $\mathbf{X}$, and additional noise $\bm{\epsilon}$,
$$ Y =  f(\mathbf{X},\bm{\epsilon}), \mathbf{X}\perp\bm{\epsilon}$$
Under different assumptions, the model can be estimated from observational data up to certain degree by linear and nonlinear regression, or independence test based approaches. The information propogation is described by the functional models, which also enables the estimation of causal effect as well and causal direction under the graphical model. 

\textbf{Counterfactual Estimation}.\citep{pearl2009causality}
Pearl's causality gives a 3-step approach -- 'abduction-action-prediction', for counterfactual estimation. In first abduction step, let $\mathbf{O}=\{\mathbf{X}, Y\}$ denote observational data, $\mathbf{X}$ is the cause of $Y$, we should infer the posterior of the exogenous variable $q(\bm{\epsilon}|\mathbf{O})$. Action approach is implemented by the language of do-operations, where $do(\mathbf{X=x'})$ sets the variable to be a value of $\mathbf{x'}$ and the prediction approach is to observe the change of the output under such action $p(y|\mathbf{x'}, \bm{\epsilon})$ . The counterfactual result is thus the interventional output specified by the SCMs. The information propogation along the causal path results in the observed counterfactual predictions which are later used as a base for quantifying the causality inside the system.   

\textbf{Mutual Information}.\citep{cover1999elements} We explore the causal characteristics on mutual information level. Mutual information is an entropy based measure of the mutual dependence between variables.
\begin{definition}The mutual information between two random variables $\mathbf{X, Z}$ is define
\begin{equation}
    {\small{{I}(\mathbf{X; Z})=\int_{\mathcal{Z}} \int_{\mathcal{X}} p_{X Z}(\boldsymbol{x}, \boldsymbol{z}) \log \left(\frac{p_{X Z}(\boldsymbol{x}, \boldsymbol{z})}{p_{X}(\boldsymbol{x}) p_{Z}(\boldsymbol{z})}\right) d \boldsymbol{x} d \boldsymbol{z}}}
\end{equation}
\end{definition}
\textbf{Data Processing Inequality (DPI)}.\citep{cover1999elements}  DPI is an information theoretic concept which describes the decreasing of the information along the Markov chain.
\begin{definition}(Data Processing Inequality) If three random variables form the Markov chain $\mathbf{X-Z-Y}$, there exist an inequality:
\begin{equation}{}
    {\small{{I}(\mathbf{X ; Z})\ge{I}(\mathbf{X ; Y})}}
\end{equation}
\end{definition}
\section{Method}
In this section, we demonstrate a method to specify the minimal sufficient parents information $\mathbf{pa_Y}$ from mixed observation $\mathbf{X}$. We firstly analyze the information propogation among different causal variables under two typical causal graphs, based on which we propose an objective function with mutual information constraint for casual representation learning. To optimize the model under such function to fulfill our hypothetical causal structure, an counterfactual vulnerability based approach is also introduced.

\subsection{Problem Definition}
\label{sec:main_obj}
Considering complex scenario like recommendation system and fault detection system. Fig.\ref{fig:intro} (left) describes the relations among observations and predicted labels, which can be generalized to the setting of Fig.\ref{fig:intro} (a)(b), {including the  base case and its extended confounding case. }The only difference lies on the additional edge from $\mathbf{pa_Y}$ to $\mathbf{dc_Y}$, since some nodes in $\mathbf{pa_Y}$ will direct point to the nodes in $\mathbf{dc_Y}$, which makes  $\mathbf{pa_Y}$ confound $\mathbf{dc_Y}$ and $Y$.
In base case, no edge between $\mathbf{nd_Y}$ and $Y$ exists, otherwise one can make adjustments by merging $\mathbf{nd_Y}$ into $\mathbf{pa_Y}$ or $\mathbf{dc_Y}$. In the confounded case, {dash lines in Fig.\ref{fig:intro} (left) exist, which means $\mathbf{pa_Y}$ contains the confounder between $Y$, $\mathbf{dc_Y}$ and $\mathbf{nd_Y}$}. Denote $\mathbf{X}\in\mathcal{X}$ as $d$-dimensional observational data like context information or features in real-world system, and ${Y}\in\mathcal{Y}$ as the labels of downstream tasks. Each pair of sample $(\mathbf{x,} y)$ is drawn i.i.d. from joint distribution $p(\mathbf{x},y)$. We use $\mathbf{pa_Y}\in\mathbb{R}^{p_1}$ to denote the variables including all observable parent nodes of $Y$ in the causal graph. Similarly, $\mathbf{dc_Y}\in\mathbb{R}^{p_2}$ and $\mathbf{nd_Y}\in\mathbb{R}^{p_3}$ denote the descendant and non-descendant nodes of $\mathbf{Y}$, respectively. 
{The minimum sufficient cause information $\mathbf{pa_Y}$ in causal system is stable information for predicting $y$. To specify $\mathbf{pa_Y}$ from observational data $\mathbf{X}$, we firstly consider the intrinsic causal mechanism behind observational data, whose causal graph is formulated by Structure Causal Models (SCMs) as}

\begin{equation}\label{eq:scms}
\setlength\abovedisplayskip{1pt}
\setlength\belowdisplayskip{5pt}
\left\{
\begin{aligned}
&\mathbf{Y} = \mathbf{g_1}(\mathbf{pa_Y}, \bm{\epsilon}_1),\\
&\mathbf{nd_Y} = \mathbf{g_2(pa_Y}, \bm{\epsilon}_2), \\ &\mathbf{dc_Y} = {\mathbf{g_3}(Y}, \bm{\epsilon_3}) \\
&\mathbf{pa_Y}\perp\bm{\epsilon_1},\mathbf{pa_Y}\perp\bm{\epsilon_2}, {Y}\perp{\bm{\epsilon_3}},
\end{aligned}
\right.
\end{equation}

where $\bm{\epsilon_1}$, $\bm{\epsilon_2}$ and $\bm{\epsilon_3}$ are assumed to be Gaussian noise, with a distribution $\mathcal{N}(\mathbf{0, \beta\mathbf{I}})$. 
{The SCMs help us to analyze the information flow among the variables. Different from classic supervise learning method, we consider the relationship among $\mathbf{pa_Y}$, $\mathbf{dc_Y}$ and $\mathbf{nd_Y}$ instead of considering that between $\mathbf{X}$ and $\mathbf{Y}$ only. }

\subsection{Causal Representation Learning Based on Mutual Information}
The difficulty lies on distinguishing $\mathbf{pa_Y}$ from $\mathbf{X}$, when one only observes $\mathbf{X}$, a mixture of them. 
{A fact is that the information contained in root $\mathbf{pa_Y}$ suffers from degradation along  causal paths, based on which we deduce inequalities on the mutual information between node pairs and design a method to learn the $\mathbf{pa_Y}$ from observational data. We obtain the following Lemma which describes the  relationship between different variables concerning mutual information. It is the key to help us identify the information of $\mathbf{pa_Y}$.}
\begin{lemma}\label{lem:dpi}
Considering two scenarios described by Fig.\ref{fig:intro}(a)(b), the following inequality is held for both base and confounded cases:
\begin{equation}
\begin{split}
\label{eq:dpi}
    I(\mathbf{pa_Y; nd_Y, dc_Y}) \le I(\mathbf{pa_Y; nd_Y}, Y)\\
\end{split}
\end{equation}
\end{lemma}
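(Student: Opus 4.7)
The plan is to reduce the lemma to a conditional data processing inequality via the chain rule of mutual information, and then invoke the Markov properties encoded in the SCMs of Eq.~(\ref{eq:scms}).

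First, both sides of the target inequality share a common summand $I(\mathbf{pa_Y};\mathbf{nd_Y})$ once one applies the chain rule: $I(\mathbf{pa_Y};\mathbf{nd_Y},\mathbf{dc_Y}) = I(\mathbf{pa_Y};\mathbf{nd_Y}) + I(\mathbf{pa_Y};\mathbf{dc_Y}\mid\mathbf{nd_Y})$, and analogously with $Y$ in place of $\mathbf{dc_Y}$. Cancelling, it suffices to establish the conditional inequality
\[
I(\mathbf{pa_Y};\mathbf{dc_Y}\mid\mathbf{nd_Y}) \;\le\; I(\mathbf{pa_Y};Y\mid\mathbf{nd_Y}).
\]

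Second, for the base case of Fig.~\ref{fig:intro}(a) the SCM gives $\mathbf{dc_Y}=\mathbf{g_3}(Y,\bm{\epsilon_3})$ with $\bm{\epsilon_3}$ independent of $(\mathbf{pa_Y},Y,\mathbf{nd_Y})$. This forces the conditional independence $\mathbf{dc_Y}\perp(\mathbf{pa_Y},\mathbf{nd_Y})\mid Y$, so that $\mathbf{pa_Y}\to Y\to \mathbf{dc_Y}$ remains a Markov chain after conditioning on $\mathbf{nd_Y}$. A conditional version of the DPI recalled in the preliminaries then yields the claim (in fact with strict slack $I(\mathbf{pa_Y};Y\mid \mathbf{dc_Y},\mathbf{nd_Y})$).

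Third, for the confounded case of Fig.~\ref{fig:intro}(b) the extra edge $\mathbf{pa_Y}\to\mathbf{dc_Y}$ destroys the Markov chain above, and here lies the main obstacle. My plan is to write the two orderings of the chain rule applied to $I(\mathbf{pa_Y};Y,\mathbf{dc_Y}\mid\mathbf{nd_Y})$, namely
\[
I(\mathbf{pa_Y};Y\mid\mathbf{nd_Y}) + I(\mathbf{pa_Y};\mathbf{dc_Y}\mid Y,\mathbf{nd_Y}) \;=\; I(\mathbf{pa_Y};\mathbf{dc_Y}\mid\mathbf{nd_Y}) + I(\mathbf{pa_Y};Y\mid \mathbf{dc_Y},\mathbf{nd_Y}),
\]
so that the target reduces to bounding the direct-edge residual $I(\mathbf{pa_Y};\mathbf{dc_Y}\mid Y,\mathbf{nd_Y})$ by $I(\mathbf{pa_Y};Y\mid \mathbf{dc_Y},\mathbf{nd_Y})$. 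I expect this step to require the mutual independence of $\bm{\epsilon}_1,\bm{\epsilon}_2,\bm{\epsilon}_3$ (which forces $\mathbf{nd_Y}\perp\mathbf{dc_Y}\mid(\mathbf{pa_Y},Y)$) together with a structural restriction on the confounded $\mathbf{g_3}$ ensuring the direct effect of $\mathbf{pa_Y}$ on $\mathbf{dc_Y}$ is dominated by the indirect effect through $Y$; without such a restriction the residual can in principle exceed the right-hand side, so pinning down the precise assumption that makes the argument close is the heart of the proof.
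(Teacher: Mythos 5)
Your reduction via the chain rule is sound and is in substance the same decomposition the paper uses: after cancelling $I(\mathbf{pa_Y};\mathbf{nd_Y})$, everything hinges on comparing $I(\mathbf{pa_Y};\mathbf{dc_Y}\mid Y,\mathbf{nd_Y})$ with $I(\mathbf{pa_Y};Y\mid\mathbf{dc_Y},\mathbf{nd_Y})$, and your treatment of the base case by a conditional DPI is correct. The gap is that you stop before closing the confounded case, conjecturing that one needs a ``dominance'' condition forcing the direct effect of $\mathbf{pa_Y}$ on $\mathbf{dc_Y}$ to be outweighed by the indirect effect through $Y$. No such comparison is made in the paper. The paper's proof simply takes the SCM of Eq.~\ref{eq:scms} at face value: even in the confounded case it writes $\mathbf{dc_Y}=\mathbf{g_3}(Y,\bm{\epsilon_3})$ with $\bm{\epsilon_3}$ exogenous and jointly independent of $(\mathbf{pa_Y},Y,\mathbf{nd_Y})$, so that conditionally on $Y$ (and $\mathbf{nd_Y}$) the only residual randomness in $\mathbf{dc_Y}$ comes from $\bm{\epsilon_3}$. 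Hence $I(\mathbf{pa_Y};\mathbf{dc_Y}\mid Y,\mathbf{nd_Y})$ reduces to $I(\mathbf{pa_Y};\bm{\epsilon_3})=0$, and the inequality follows with no need to bound one conditional information by the other. In other words, the ``precise assumption that makes the argument close'' is already encoded in Eq.~\ref{eq:scms}: $\mathbf{g_3}$ does not take $\mathbf{pa_Y}$ as an argument.

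That said, your hesitation is diagnostically on target. If the extra edge $\mathbf{pa_Y}\to\mathbf{dc_Y}$ announced for Fig.~\ref{fig:intro}(b) were realized functionally, i.e.\ $\mathbf{dc_Y}=\mathbf{g_3}(\mathbf{pa_Y},Y,\bm{\epsilon_3})$, then $I(\mathbf{pa_Y};\mathbf{dc_Y}\mid Y,\mathbf{nd_Y})$ need not vanish and the lemma can fail outright (take $\mathbf{dc_Y}=\mathbf{pa_Y}$: the left-hand side becomes $H(\mathbf{pa_Y})$, which generically exceeds $I(\mathbf{pa_Y};\mathbf{nd_Y},Y)$). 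So the statement is only as strong as the structural equation for $\mathbf{dc_Y}$, and your proof would be complete had you invoked that equation to kill the residual term rather than searching for a dominance condition. You correctly identified the exact term on which the claim lives and dies; you just did not notice that the paper's own SCM already sets it to zero.
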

{Note that the equality is achieved when SCMs are deterministic in causal system and the variance of exogenous variables is equal to zero. In other words, it is a  deterministic generative system.}
We develop an algorithm to learn representations based on such hypothetical structure using the presented inequalities.  Let  $\mathbf{Z} = \phi(\mathbf{X})$ denote representation extracted from original observation $\mathbf{X}$, where $\phi: \mathcal{X}\rightarrow\mathcal{Z}$ is the representation extraction function. For the causal system shown in Fig.\ref{fig:intro}, an important fact is that $\mathbf{pa_Y}$ is the minimal sufficient statistics of the observational data and $\mathbf{Y}$, since it is the root cause that dominates the generative process of causal system defined in Fig.\ref{fig:intro}. If there exists a mapping from $\mathbf{X}$ to $\mathbf{pa_Y}$, it is a function that finds the minimal sufficient statistics of the causal system. The minimal sufficient is formally defined as follows:

\begin{definition}
(Minimal Sufficient Statistic \citep{lehmann2012completeness}). Let $\mathbf{X}, Y$ be random variables.  $\mathbf{Z'}$ is sufficient for ${Y}$ if and only if $\forall \mathbf{x \in \mathcal{X}}, \mathbf{z' \in \mathcal{Z}}, {y \in \mathcal{Y}}, p(\mathbf{x|z',} y) = p(\mathbf{x|z'})$. A sufficient statistic $\mathbf{Z^*}$ is minimal if and only if for any sufficient statistic $\mathbf{Z}$, there exists a deterministic function f such that $\mathbf{Z^* = f(Z)}$ almost everywhere w.r.t $\mathbf{X}$.
\end{definition} 
In this paper, we argue that a satisfactory solution of the representation is that $\mathbf{Z}$ is equal to $\mathbf{pa_Y}$.  To optimize the model under such objective based on the Eq.\ref{eq:dpi}, we formulate this as a minmax optimization problem, and provide  theoretical analyses for such approach. The following theorem, proven in Appendix \ref{sup:proof}, illustrates the equivalence between satisfying Lemma \ref{lem:dpi} and  obtaining minimal sufficient statistics. 
\begin{theorem}\label{thm:sufficient_statistics}
Let $\mathbf{Z'}, \mathbf{Z}\in\mathcal{Z}$, $\mathbf{Z}=\phi(\mathbf{X})$ is minimal sufficient statistics of $(Y, \mathbf{nd_Y)}$ if and only if
\begin{equation}
    \mathbf{pa_Y} = \argmin_{\mathbf{Z}} I(\mathbf{Z;nd_Y, dc_Y})  
    \text{s.t.} I(\mathbf{Z}; Y, \mathbf{nd_Y})={\max}_{\mathbf{Z'}} I(\mathbf{Z'}; Y, \mathbf{nd_Y})
\end{equation}
\end{theorem}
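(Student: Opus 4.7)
The plan is to establish the equivalence in two parts: first, translate the constraint into the sufficiency condition of the minimal-sufficient-statistic definition; second, use Lemma \ref{lem:dpi} together with the Markov structure implied by the SCMs in Eq.\ref{eq:scms} to identify $\mathbf{pa_Y}$ as the unique minimizer of the constrained objective.

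First, I would rewrite the constraint. Since $\mathbf{Z}'=\phi(\mathbf{X})$ ranges over all measurable functions of $\mathbf{X}$, DPI gives $I(\mathbf{Z}';Y,\mathbf{nd_Y}) \le I(\mathbf{X};Y,\mathbf{nd_Y})$, with equality exactly when $\phi$ preserves all information about $(Y,\mathbf{nd_Y})$; this equality is the working version of the sufficiency condition in the definition. Using the SCMs, $Y=\mathbf{g_1}(\mathbf{pa_Y},\bm{\epsilon}_1)$, $\mathbf{nd_Y}=\mathbf{g_2}(\mathbf{pa_Y},\bm{\epsilon}_2)$, $\mathbf{dc_Y}=\mathbf{g_3}(Y,\bm{\epsilon}_3)$, together with $\mathbf{nc_Y}$ being uncorrelated with the rest, imply that $(\mathbf{dc_Y},\mathbf{nc_Y})\perp (Y,\mathbf{nd_Y}) \mid \mathbf{pa_Y}$. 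This conditional independence yields $I(\mathbf{X};Y,\mathbf{nd_Y})=I(\mathbf{pa_Y};Y,\mathbf{nd_Y})$, so the constraint is tight for $\mathbf{Z}=\mathbf{pa_Y}$, i.e., $\mathbf{pa_Y}$ is feasible.

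Second, for minimality within the feasible set, I would combine Lemma \ref{lem:dpi} with the definition of a minimal sufficient statistic. For any feasible (sufficient) $\mathbf{Z}$, the definition provides a deterministic $f$ with $\mathbf{pa_Y}=f(\mathbf{Z})$ almost surely; DPI along $\mathbf{Z}\to\mathbf{pa_Y}\to(\mathbf{nd_Y},\mathbf{dc_Y})$ then gives $I(\mathbf{Z};\mathbf{nd_Y},\mathbf{dc_Y})\ge I(\mathbf{pa_Y};\mathbf{nd_Y},\mathbf{dc_Y})$, so $\mathbf{pa_Y}$ attains the minimum. For the converse direction, I would apply the chain rule $I(\mathbf{Z};\mathbf{nd_Y},\mathbf{dc_Y}) = I(\mathbf{pa_Y};\mathbf{nd_Y},\mathbf{dc_Y}) + I(\mathbf{Z};\mathbf{nd_Y},\mathbf{dc_Y}\mid \mathbf{pa_Y})$ to show that if a sufficient $\mathbf{Z}^\star$ is not minimal, it carries strictly positive residual mutual information with $(\mathbf{nd_Y},\mathbf{dc_Y})$ that, by the mixture form $\mathbf{X}=\mathbf{h}(\mathbf{pa_Y},\mathbf{nd_Y},\mathbf{dc_Y},\mathbf{nc_Y})$, cannot be absorbed into $\mathbf{nc_Y}$ without violating feasibility.

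The main obstacle will be closing this converse rigorously, especially in the confounded case of Fig.\ref{fig:intro}(b), where extra edges from $\mathbf{pa_Y}$ into $\mathbf{dc_Y}$ break the naive Markov-chain reasoning and force the use of $d$-separation. The cleanest route I foresee is a careful case analysis on the two graphs: argue that any ``excess'' coordinate of $\mathbf{Z}^\star$ beyond $\mathbf{pa_Y}$ must lie in the $(\mathbf{nd_Y},\mathbf{dc_Y})$-generated subspace of $\mathbf{X}$ (since $\mathbf{nc_Y}$ contributes nothing to the sufficiency constraint), and therefore strictly increases $I(\mathbf{Z}^\star;\mathbf{nd_Y},\mathbf{dc_Y})$. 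Completing this step, together with the standard minimal-sufficient-statistic uniqueness argument, then yields the equivalence.
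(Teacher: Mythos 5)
Your overall strategy is essentially the paper's: the appendix proves Theorem 2 via two lemmas patterned on Lemmas 12 and 13 of Shamir et al.\ (2010) --- the constraint characterizes sufficiency through the Markov chains $(Y,\mathbf{nd_Y})-\mathbf{X}-\mathbf{Z}'$ and $(Y,\mathbf{nd_Y})-\mathbf{Z}-\mathbf{X}$ plus DPI, and the objective characterizes minimality through the fact that a minimal sufficient statistic is a deterministic function of every sufficient statistic, again followed by DPI. Your parts 1 and 2 reproduce exactly this decomposition, and your inequality $I(\mathbf{Z};\mathbf{nd_Y},\mathbf{dc_Y})\ge I(f(\mathbf{Z});\mathbf{nd_Y},\mathbf{dc_Y})$ is the same step the paper derives from the chain $(\mathbf{nd_Y},\mathbf{dc_Y})-\mathbf{Z}'-\mathbf{Z}$.

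One step fails as written, however. You claim the SCMs imply $(\mathbf{dc_Y},\mathbf{nc_Y})\perp(Y,\mathbf{nd_Y})\mid\mathbf{pa_Y}$ and hence $I(\mathbf{X};Y,\mathbf{nd_Y})=I(\mathbf{pa_Y};Y,\mathbf{nd_Y})$. But $\mathbf{dc_Y}=\mathbf{g_3}(Y,\bm{\epsilon}_3)$ is a \emph{child} of $Y$: conditioning on $\mathbf{pa_Y}$ does not block the edge $Y\to\mathbf{dc_Y}$, so $\mathbf{dc_Y}$ remains dependent on $Y$ given $\mathbf{pa_Y}$ whenever $\bm{\epsilon}_1$ has positive variance ($\mathbf{dc_Y}$ leaks information about the realized $Y$ beyond what $\mathbf{pa_Y}$ determines). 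Consequently $I(\mathbf{X};Y,\mathbf{nd_Y})>I(\mathbf{pa_Y};Y,\mathbf{nd_Y})$ in general, and $\mathbf{pa_Y}$ is not feasible for the constraint by the route you propose. To be fair, the paper's own first lemma silently assumes the same thing when it equates $I(Y,\mathbf{nd_Y};\mathbf{pa_Y})$ with $\max_{\mathbf{Z}'}I(Y,\mathbf{nd_Y};\mathbf{Z}')$, and the main text concedes that the analogous equalities in Lemmas 1 and 3 hold only ``when SCMs are deterministic''; your proof needs that zero-noise caveat made explicit, since the conditional-independence argument cannot deliver feasibility in the noisy case. Separately, your concern about needing $d$-separation for the confounded graph in the converse is not how the paper closes that direction: it handles minimality purely through the $\mathbf{pa_Y}=f(\mathbf{Z})$ property and DPI, with no case analysis on the two graphs, so the elaborate ``excess coordinate'' argument you sketch is unnecessary once the deterministic-function characterization is invoked.
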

By Theorem \ref{thm:sufficient_statistics}, when the underlying causal information is unavailable from observational data, we use above minmax approach to identify the causal information. The process of finding an optimal representation $\mathbf{z} = \phi(\mathbf{x})$ is alternatively formulated as maximizing the following Lagrangian function:
\begin{equation}\label{eq:delta}
\begin{split}
    \delta(\phi)  &= 
    \max_\phi \underbrace{I(\phi(\mathbf{X});Y, \mathbf{nd_Y})}_{\small{\circled{1}}}  - \lambda\underbrace{I(\phi(\mathbf{X);nd_Y, dc_Y})}_{\small{\circled{2}}}
\end{split}
\end{equation}
Note that since the information of $\mathbf{nd_Y}$ and $\mathbf{dc_Y}$ is not revealed, the above objective function is not able to be optimized directly. To get an tractable form of this objective function, we firstly present the inequalities below (proven in Appendix \ref{sup:proof}).

\begin{lemma}\label{lem:ine_i}
Suppose the features and labels are $\mathbf{X}, Y$, where $\mathbf{X}$ is generated by the consists of the minimal sufficient parents, descendants and non-descendants as $\mathbf{X} = \mathbf{h(pa_Y, nd_Y, dc_Y)}$, where $\mathbf{h}(\cdot)$ is an injective function. The following inequality holds
\begin{enumerate}
\item $I(\mathbf{pa_Y;nd_Y, dc_Y})\le I(\mathbf{pa_Y;X})$
\item $I(\mathbf{pa_Y;}Y)\le I(\mathbf{pa_Y;nd_Y},Y)$
\end{enumerate}
\end{lemma}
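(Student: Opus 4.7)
The plan is to prove the two inequalities separately, using only (i) the Data Processing Inequality stated earlier and (ii) the chain rule of mutual information together with nonnegativity of conditional mutual information. Neither step should require heavy calculation; the only delicate point is verifying that the relevant Markov chain actually holds in part~1.

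For the first inequality $I(\mathbf{pa_Y};\mathbf{nd_Y},\mathbf{dc_Y}) \le I(\mathbf{pa_Y};\mathbf{X})$, I would start from the hypothesis $\mathbf{X} = \mathbf{h}(\mathbf{pa_Y},\mathbf{nd_Y},\mathbf{dc_Y})$ with $\mathbf{h}$ injective. Injectivity guarantees the existence of a measurable left inverse, so the pair $(\mathbf{nd_Y},\mathbf{dc_Y})$ is a deterministic function of $(\mathbf{pa_Y},\mathbf{X})$; combined with the SCM in Eq.~\ref{eq:scms}, one can in fact recover $(\mathbf{nd_Y},\mathbf{dc_Y})$ from $\mathbf{X}$ alone (since $\mathbf{pa_Y}$ is itself determined by $\mathbf{X}$ through the inverse of $\mathbf{h}$). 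Hence the three variables form the Markov chain $\mathbf{pa_Y} \to \mathbf{X} \to (\mathbf{nd_Y},\mathbf{dc_Y})$, and the DPI stated in the Preliminaries applied to this chain yields the desired inequality. The main subtlety I would be careful about is making the injectivity argument rigorous: if $\mathbf{h}$ is only injective jointly in all its arguments, then the inverse map $\mathbf{h}^{-1}$ acts on $\mathbf{X}$ and returns $(\mathbf{pa_Y},\mathbf{nd_Y},\mathbf{dc_Y})$, from which projecting onto the last two coordinates gives the required functional dependence.

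For the second inequality $I(\mathbf{pa_Y}; Y) \le I(\mathbf{pa_Y}; \mathbf{nd_Y}, Y)$, I would simply apply the chain rule of mutual information,
\begin{equation*}
I(\mathbf{pa_Y}; \mathbf{nd_Y}, Y) \;=\; I(\mathbf{pa_Y}; Y) \;+\; I(\mathbf{pa_Y}; \mathbf{nd_Y} \mid Y),
\end{equation*}
and note that conditional mutual information is always nonnegative, so the second term on the right is $\ge 0$ and the inequality follows immediately. No use of the SCM is needed here beyond the fact that all quantities are well-defined.

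Overall, the content of Lemma~\ref{lem:ine_i} is essentially two bookkeeping facts in information theory, and the only genuinely model-specific step is confirming the Markov chain in part~1. I would expect the writeup to fit in a short paragraph per inequality, with the injectivity-to-Markov-chain reduction being the single place where care is warranted.
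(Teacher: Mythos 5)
Your proof is correct. For part~1 you and the paper use essentially the same ingredient: injectivity of $\mathbf{h}$ makes $\mathbf{X}$ informationally equivalent to the triple $(\mathbf{pa_Y},\mathbf{nd_Y},\mathbf{dc_Y})$. The paper phrases this as $I(\mathbf{pa_Y};\mathbf{nd_Y},\mathbf{dc_Y}) \le I(\mathbf{pa_Y};\mathbf{nd_Y},\mathbf{dc_Y},\mathbf{pa_Y}) \le I(\mathbf{pa_Y};\mathbf{X})$, while you phrase it as the Markov chain $\mathbf{pa_Y} \to \mathbf{X} \to (\mathbf{nd_Y},\mathbf{dc_Y})$ followed by DPI; these are interchangeable, and your care about what injectivity actually delivers (a measurable left inverse, then a projection) is the right place to be careful. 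The real divergence is in part~2. The paper proves $I(\mathbf{pa_Y};Y)\le I(\mathbf{pa_Y};\mathbf{nd_Y},Y)$ by a detour through algorithmic information theory: it introduces Kolmogorov complexity, the recursive decomposition of $K$ over the causal DAG, derives $I(\mathbf{nd_Y},Y:\mathbf{pa_Y}) \overset{+}{\ge} I(Y:\mathbf{pa_Y})$ up to additive constants, and then passes to Shannon information via the asymptotic entropy--complexity correspondence. Your argument --- the chain rule $I(\mathbf{pa_Y};\mathbf{nd_Y},Y)=I(\mathbf{pa_Y};Y)+I(\mathbf{pa_Y};\mathbf{nd_Y}\mid Y)$ plus nonnegativity of conditional mutual information --- establishes the same inequality in two lines, for arbitrary random variables, with no causal assumptions and none of the $\overset{+}{=}$ (up-to-constants) caveats or limiting arguments the algorithmic route requires. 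What the paper's longer route buys is an explicit expression for the slack, $I(\mathbf{nd_Y};\mathbf{pa_Y})-I(\mathbf{nd_Y};Y)\ge 0$, which ties the size of the gap to the causal structure (DPI along $\mathbf{nd_Y}\leftarrow\mathbf{pa_Y}\rightarrow Y$); but for the inequality as stated in the lemma, your elementary argument is sufficient and arguably preferable.
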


The same as Eq.\ref{eq:dpi}, the equality is achieved when all the functions in causal system is deterministic and $H(\bm{\epsilon})=0$. We use Lemma \ref{lem:ine_i} (1) and (2) to substitute(1)and(2)in Eq.\ref{eq:delta} respectively, to get the transformed lower bound of the objective function. The new objective function is tractable since one no longer needs to specify $\mathbf{dc_Y}$ and $\mathbf{nd_Y}$ in advance.
\begin{equation}\label{eq:ib}
    \delta(\phi) \ge L(\phi) = \max_\phi I(\phi(\mathbf{X});{Y})-\lambda I(\phi(\mathbf{X});\mathbf{X}) 
\end{equation}
The above objective function coincides with deep Information Bottleneck (IB). The difference is that IB is deduced from Rate Distortion Theorem in information theory, and it holds under the structure of Markov Chain instead of a causal graph (i.e. Fig.\ref{fig:intro}). In this paper, the IB setting is generalized into causal space, by bridging minimal sufficient statistics with root cause variables in the hypothetical causal graph. Although the tractable objective (Eq.\ref{eq:ib}) enables the optimization of the model with the function $\phi$, one challenge is that it cannot correctly discern $\mathbf{pa_Y}$ and $\mathbf{dc_Y}$ like intractable objective Eq.\ref{eq:delta}. Since $\mathbf{dc_Y}$ and $\mathbf{pa_Y}$ are both closely correlated to $\mathbf{Y}$, the learned $\phi(\mathbf{X})$ might include the  information of $\mathbf{dc_Y}$. In the following section, we apply the counterfactual estimation process to get sufficient and necessary cause $\mathbf{pa_Y}$, so that the issue is addressed.



\subsection{Learning Representation by Counterfactual Estimation of PNS}
In this section, we introduce a method to learn the minimal sufficient parental information of the labels. The core idea is to consider the counterfactual identifiable probability of necessary and sufficient (PNS) cause of predicted labels \citep{pearl2009causality, wang2021desiderata}. Under this schema, obtaining the cause information is transformed into a tractable task satisfying counterfactual PNS on $Y$.  We design a robust intervention vulnurability term to quantify the degree of satisfied PNS, and obtain a set of parameters that maximize the PNS probability, in order to find sufficient and necessary causal information.

\begin{definition}\label{def:PNS}(PNS Distribution \citet{pearl2009causality}) Suppose we observe a data point with representation $\mathbf{Z = z}$ and label ${Y = y}$. The probability of necessary and sufficient (pns) of $\mathbb{I}\{\mathbf{Z = z}\}$ for
$\mathbb{I}\{{Y = y}\}$:
\begin{equation}\label{eq:pns}
\begin{aligned}
\mathrm{PNS} &=\underbrace{\mathbb{E}_{p(\mathbf{Z} \neq \mathbf{z}, Y \neq y)} P(Y(\mathbf{Z}=\mathbf{z})=y \mid \mathbf{Z} \neq \mathbf{z}, Y \neq y)}_{\text {(1) sufficient }} \\
&+\underbrace{\mathbb{E}_{p(\mathbf{Z}=\mathbf{z}, Y=y)} P(Y(\mathbf{Z} \neq \mathbf{z}) \neq y \mid \mathbf{Z}=\mathbf{z}, Y=y)}_{\text {(2) necessary }}
\end{aligned}
\end{equation}
\end{definition}
PNS is identifiable if and only if $\mathbf{Z}$ is the parent of $Y$ \citep{pearl2009causality}. To be more specific, sufficient (Eq.\ref{eq:pns}(1)) describes the capability of the causal representation to generate labels and necessary (Eq.\ref{eq:pns}(2)) indicates the probability of negative counterfactual label if we intervene on representation $\mathbf{z}$. Although it is theoretically well sound, the estimation of PNS is with some difficulty since  $P(Y (\mathbf{Z = z}) = y|\mathbf{Z \not= z}, Y\not=y)$ and $P(Y (\mathbf{Z \not= z}) \not= y|\mathbf{Z = z}, Y=y)$ are both counterfactual distributions, e.g., the sufficient term in Eq.\ref{eq:pns} means $P(Y=y)$ given the observation $\mathbf{Z\not=z}, Y\not=y$, if $\mathbf{Z}$ is intervened to be $\mathbf{z}$ ($do(\mathbf{Z = z})$). To enable Eq.\ref{eq:pns} to be tractable during learning, we consider an objective equivalent to Definition \ref{def:PNS} as:
\begin{equation}\label{eq:pns1}
\begin{split}
    \text{PNS} = & \mathbb{E}_{p(\mathbf{Z \not= z}, Y\not=y)}P(Y (\mathbf{Z = z}) = y|\mathbf{Z \not= z}, Y\not=y) \\
    &- \mathbb{E}_{p(\mathbf{Z = z}, Y=y)}P(Y (\mathbf{Z \not= z}) = y|\mathbf{Z = z}, Y=y)
\end{split}
\end{equation}
\begin{figure*}[t]
\centering
\setlength{\fboxrule}{0.pt}
\setlength{\fboxsep}{0.pt}
\fbox{
\includegraphics[width=0.5\linewidth]{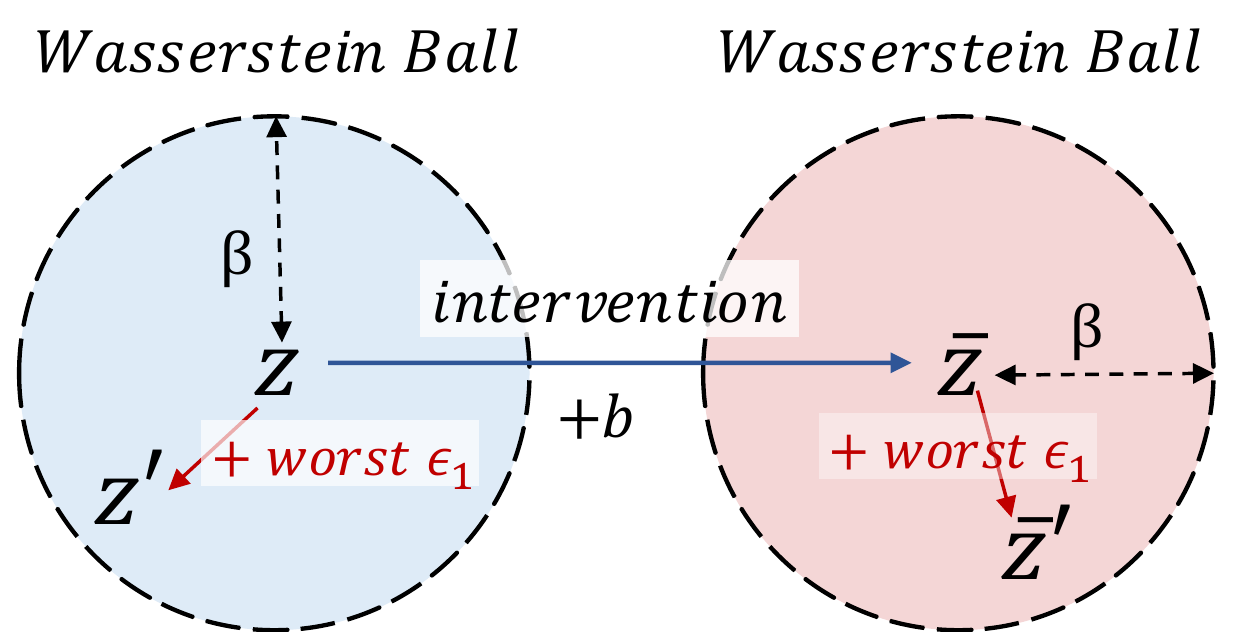}
}
\caption{{The intervention process, where $\mathbf{z}$ is intervened to be $\mathbf{\bar{z}}$. $\mathbf{z}$ and  $\mathbf{\bar{z}}$ are perturbed as $\mathbf{z'}$ and  $\mathbf{\bar{z}'}$ in Wasserstein ball.}}
\label{fig:cv}
\end{figure*}

As shown in Fig.\ref{fig:cv}, denote  $\mathbf{\bar{z}} $ the intervened $\mathbf{z}$. The intervention process is controlled by a bias term $b$ which is considered a hyperparameter where $\mathbf{\bar{z}} = \mathbf{{z}} + b$.
We use this term to specify $\mathbf{Z \not= z}$ by changing $\mathbf{Z = \bar{z}}$.
As illustrated in Preliminary, counterfactual distribution estimation normally follows a 3-step procedure in Pearl's framework, under which we propose a constrained method  for counterfactual estimation. Considering the causal generative process $\mathbf{Y} = f(\mathbf{pa_Y}, \bm{\epsilon}_1)$, the $\bm{\epsilon}_1$ is regarded as a random noise perturbing the $\mathbf{pa_y}$ inside a ball with finite diameter.  We treat the inference approach as the process of adversarial attack \citep{szegedy2013intriguing, ben2009robust, biggio2018wild} and define the 'Actions'-step in counterfactual estimation as
\begin{equation}\label{eq:action_step}
\begin{split}
\mathbf{z}' = \mathbf{z} + \bm{\epsilon}_1, \mathbf{z}'\in \mathcal{B}(\mathbf{z}, \beta)\\
\mathbf{\bar{z}}' = \bar{\mathbf{z}}+ \bm{\epsilon}_1, \mathbf{\bar{z}}'\in\mathcal{B}( \mathbf{\bar{z}}, \beta)
\end{split}
\end{equation}

where $\mathcal{B}(\mathbf{z}, \beta)$ is Wasserstein ball, in which the $p$-th Wasserstein distance \citep{panaretos2019statistical} $\mathrm{W}_{p}$ \footnote{$\mathrm{W}_{p}(\mu, \nu)=\left(\inf _{\gamma \in \Gamma(\mu, \nu)} \int_{\mathcal{Z} \times \mathcal{Z}} \Delta\left(\boldsymbol{z}, \boldsymbol{z}^{\prime}\right)^{p} d \gamma\left(z, z^{\prime}\right)\right)^{1 / p}$, $\Gamma(\mu, \nu$ is the collection of all probability measures
on $\mathcal{Z} \times \mathcal{Z}$}between $z$ and $\mathbf{z}'$ is smaller than $\beta$. {$\mathbf{z'}$ and $\mathbf{\bar{z}'}$ integrate both intervention and exogenous information.}
We further define an counterfactual vulnerability (CV) to measure PNS from the perspective of mutual information. The defined counterfactual vulnerability term quantifies the vulnerability of prediction model (classification) under input perturbations, which is formed later. It adjusts the parameterized $\mathbf{Z}$ to fit PNS Eq.\ref{eq:pns1}, which later can be estimated via maximum likelihood as following definition. 


\begin{definition} (Counterfactual Vulnerability) \label{def:cv_term}
Let $\mathbf{\bar{Z}}'$ denote intervened variables on $\mathbf{Z}=\phi(\mathbf{X})$, $\forall \mathbf{z}'\in \mathcal{B}(\mathbf{z}, \beta), \mathbf{\bar{z}}'\in\mathcal{B}( \mathbf{\bar{z}}, \beta)$, $D$ and $D'$ denote datasets sample from $p(\mathbf{z', y})$ and $p(\mathbf{\bar{z}', y})$, the  vulnerability of robust counterfactual estimation is defined as
\begin{equation}
\begin{split}
    &\min_{\mathbf{z}'\in \mathcal{B}(\mathbf{z}, \beta), \mathbf{\bar{z}}'\in \mathcal{B}(\mathbf{\bar{z}}, \beta)}CV_\mathcal{B} \\  
    = & \min_{\mathbf{z}'\in \mathcal{B}(\mathbf{z}, \beta)} \frac{1}{|D|}\sum_{\mathbf{z}', \mathbf{y}} \log P(Y=y|\mathbf{Z}' = \mathbf{z}')\\
    &-\min_{\mathbf{\bar{z}}'\in \mathcal{B}(\mathbf{\bar{z}}, \beta)}\frac{1}{|D'|}\sum_{\mathbf{\bar{z}}', \mathbf{y}} \log P(Y=y|\mathbf{\bar{Z}}' = \mathbf{\bar{z}}') \\
    =&\min_{\mathbf{z}'\in \mathcal{B}(\mathbf{z}, \beta)}H(Y|\mathbf{Z}') - \min_{\mathbf{\bar{z}}'\in \mathcal{B}(\mathbf{\bar{z}}, \beta)}H(Y|\mathbf{\bar{Z}}') +H(Y) \\
    &- H(Y)\\
    =& \min_{\mathbf{z}'\in \mathcal{B}(\mathbf{z}, \beta)}I(Y;\mathbf{Z}') - \min_{\mathbf{\bar{z}}'\in \mathcal{B}(\mathbf{\bar{z}}, \beta)}I(Y;\mathbf{\bar{Z}}') 
\end{split}
\end{equation}

\end{definition}
\begin{remark}
We can interpret the approach under the Pearl's 'Abduction-Action-Prediction' three-level framework. 'Abduction'-step: In Definition \ref{def:cv_term}, search for the worst $\bm{\epsilon}_1$ under minimum process of CV term, which capture the vulnerability in system. 'Action'-step: The Eq.\ref{eq:action_step} describes. 'Prediction'-step: we use the formulation of SCMs $y = f(\mathbf{z}, \bm{\epsilon}_1)$ to predict counterfactual results. 
\end{remark}
Combining the CV term with original objective $\delta(\phi)$, we get the final objective function optimized by minmax approach. Equivalently, we only need to optimize $I(\mathbf{Z'};Y)$ rather than $I(\mathbf{Z};Y)+I(\mathbf{Z'};Y)$ since if the worst case $I(\mathbf{Z}';Y)$ is satisfied, $I(\mathbf{Z};Y)$ is satisfied. The  robust optimization objective function is $L_\text{rb}{\phi}$, where
{\small{{\begin{equation}\label{eq:l_rb}
\begin{split}
    &\max_{\phi} \min_{\mathbf{z}'\in \mathcal{B}(\mathbf{z}, \beta), \mathbf{\bar{z}}'\in \mathcal{B}(\mathbf{\bar{z}}, \beta)} I(\phi(\mathbf{X});Y)-\lambda I(\phi(\mathbf{X);X}) + CV_\mathcal{B} \\
    \ge&\max_{\phi} \min_{\mathbf{z}'\in \mathcal{B}(\mathbf{z}, \beta), \mathbf{\bar{z}}'\in \mathcal{B}(\mathbf{\bar{z}}, \beta)} \underbrace{I(\mathbf{Z'};Y)-\lambda I(\mathbf{Z;X})}_{\text{(1) positive}} -\underbrace{I(\bar{\mathbf{Z}}';Y) }_{ \text{(2) negative}} \\
    =&L_\text{rb}(\phi)
\end{split}
\end{equation}}}}

Our model learning is accompanlished by the minmax procedure. Literally, the minimization procedure helps identifying the counterfactual interventional output under pertubations, or equalvalently, infer the exogenous variables from the perspective of adversarial learning. The maximization procedure is to identify a function that learns the representation most likely to satisfy the PNS condition. By optimizing the positive term, we aim at finding sufficient parent information, and optimizing the negative term is to find necessary parent information, so that the final optimization objective can extract minimal sufficient parents from observation data with high probability. 

\section{Sample Complexity}
In this section, we theoretically analyze the proposed algorithm by the probability approximately correct (PAC) framework. We start  from the perspective of information theory, and it in fact can be generalized to deep learning models.  We answer the question why causal representation containing cause information enhances the generalization ability. Different with traditional PAC learning theorem, we  analyze the risk by mutual information, which follows the framework of information bottleneck \citep{shamir2010learning}. We provide a finite sample bound of generalization ability. The bound measures the relationship between $I(\mathbf{Z}; Y)$ and its estimation $\hat{I}(\mathbf{Z}; Y)$. Here, we provide theoretical justification with following theory (proven in Appendix \ref{sup:proof}):


\begin{theorem}\label{thm:sample_complexity}
Let $\mathbf{Z}=\phi(\mathbf{X})$ where $\phi: \mathcal{X}\rightarrow\mathcal{Z}$ be a fixed arbitrary function, determined by a known conditional probability distribution $p(\mathbf{z|x})$. Let $m$ be sample size  and the joint distribution is  $p(\mathbf{X}, Y)$. For
any confidence parameter $0<\delta<1$, it holds with a probability of at least $1 - \delta$, that

1. General case ($\mathbf{Z = \phi(X)}$)
    {\small{\begin{equation}
    \begin{aligned}
        &|I(Y ; \mathbf{Z})-\hat{I}(Y ; \mathbf{Z})| \leq\\
        &\frac{\sqrt{C \log (|\mathcal{Y}| / \delta)}\left(|\mathcal{Y}|\sqrt{|\mathcal{Z}|} \log (m)+\frac{1}{2} \sqrt{|\mathcal{Z}|} \log (|\mathcal{Y}|)\right)+\frac{2}{e}|\mathcal{Y}|}{\sqrt{m}}
    \end{aligned}
    \end{equation}}}
    where $m \geq \frac{C}{4} \log (|\mathcal{Y}| / \delta)|\mathcal{Z}|\mathrm{e}^{2}$
    
2. Ideal case ($\mathbf{Z = \phi(X) = pa_Y}$)
    {\small{\begin{equation}
    \begin{split}
        &|I(Y ; \mathbf{Z})-\hat{I}(Y ; \mathbf{Z})|\\
        &\leq \frac{\sqrt{C \log (|\mathcal{Y}| / \delta)}\left(|\mathcal{Y}|\sqrt{\beta} \log (m)+\frac{1}{2} \sqrt{|\mathcal{Z}|} \log (|\mathcal{Y}|)\right)+\frac{2}{e}|\mathcal{Y}|}{\sqrt{m}}\nonumber
    \end{split}
    \end{equation}}}
    where $m \geq C \log (|\mathcal{Y}| / \delta)\beta\mathrm{e}^{2}$
\end{theorem}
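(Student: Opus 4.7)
The plan is to follow the information-theoretic PAC framework of \citet{shamir2010learning} and reduce the estimation error of $I(Y;\mathbf{Z})$ to concentration bounds on plug-in estimates of the constituent entropies. First I would decompose
\begin{equation*}
I(Y;\mathbf{Z}) - \hat{I}(Y;\mathbf{Z}) = \bigl(H(Y) - \hat{H}(Y)\bigr) - \bigl(H(Y\mid\mathbf{Z}) - \hat{H}(Y\mid\mathbf{Z})\bigr),
\end{equation*}
apply the triangle inequality, and control each entropy deviation separately. The additive $\tfrac{2}{e}|\mathcal{Y}|/\sqrt{m}$ term is the standard plug-in bias of order $|\mathcal{Y}|/m$ scaled to match the other contributions, while the factor $\sqrt{\log(|\mathcal{Y}|/\delta)}$ in the numerator emerges from applying McDiarmid's inequality to $\hat{H}(Y)$, whose per-sample sensitivity is $O(\log(m)/m)$.

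The delicate piece is the conditional entropy. I would expand $\hat{H}(Y\mid\mathbf{Z}) = \sum_{\mathbf{z}}\hat{p}(\mathbf{z})\hat{H}(Y\mid\mathbf{Z}=\mathbf{z})$ and split the deviation into two contributions: one from the marginal $\hat{p}(\mathbf{z}) - p(\mathbf{z})$ controlled by a uniform $L_1$ concentration for multinomials of a Bretagnolle--Huber--Carol type, and one from the slicewise $\hat{p}(y\mid\mathbf{z}) - p(y\mid\mathbf{z})$ bounded per value of $\mathbf{z}$ and then union-bounded over $\mathbf{z}\in\mathcal{Z}$. Both carry a $\sqrt{|\mathcal{Z}|}$ factor; the additional $|\mathcal{Y}|\log(m)$ multiplier is the Lipschitz constant of $p\mapsto H(Y\mid\mathbf{Z}=\mathbf{z})[p]$ on probability vectors whose entries are at least $1/m$, which is precisely why the hypothesis $m \geq \tfrac{C}{4}\log(|\mathcal{Y}|/\delta)|\mathcal{Z}|e^2$ is imposed: it guarantees with high probability that every relevant empirical mass is non-negligible so that this Lipschitz bound applies.

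For the ideal case $\mathbf{Z}=\mathbf{pa_Y}$, the goal is to sharpen $\sqrt{|\mathcal{Z}|}$ to $\sqrt{\beta}$ in the leading term. My plan is to exploit the SCM structure in Eq.~\ref{eq:scms}: since $Y = \mathbf{g_1}(\mathbf{pa_Y},\bm{\epsilon}_1)$ with $\bm{\epsilon}_1\sim\mathcal{N}(\mathbf{0},\beta\mathbf{I})$, the conditional law $p(Y\mid\mathbf{pa_Y})$ is a Gaussian-smoothed mechanism whose effective support size --- the number of cells needed to cover the mass-carrying region of $p(\mathbf{pa_Y})$ up to a perturbation of order $\beta$ --- scales as $O(\beta)$ rather than $|\mathcal{Z}|$. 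Concretely, I would introduce a $\beta$-quantization of $\mathcal{Z}$ into $O(\beta)$ cells, apply the general bound to the quantized variable, and absorb the discretization error in the mutual information using the Lipschitz stability of entropy under small Wasserstein perturbations of the conditional laws. The main obstacle is matching constants precisely: the quantization argument must produce exactly $\sqrt{\beta}$ (and not, e.g., $\sqrt{\beta\log(1/\beta)}$) in the dominant term, and the relaxed threshold $m\geq C\log(|\mathcal{Y}|/\delta)\beta e^2$ must be shown to be the sharp one dictated by this effective support --- here the minimal-sufficiency property of $\mathbf{pa_Y}$ established in Theorem~\ref{thm:sufficient_statistics} is what rules out residual dependence on the ambient cardinality $|\mathcal{Z}|$.
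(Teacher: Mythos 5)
Your outline of the general case tracks the paper's argument (which itself follows Theorem~3 of \citet{shamir2010learning}): the same triangle-inequality decomposition into $|H(Y\mid\mathbf{Z})-\hat{H}(Y\mid\mathbf{Z})|$ and $|H(Y)-\hat{H}(Y)|$, the same further split of the conditional term into a marginal-deviation summand and a slicewise summand, and concentration of empirical distribution vectors with a union bound over $|\mathcal{Y}|+2$ quantities. Two details differ from the paper, though neither is fatal: the $\sqrt{|\mathcal{Z}|}$ factors arise from Cauchy--Schwarz against $|\mathcal{Z}|$-dimensional vectors (e.g.\ $V(\hat{\mathbf{H}}(Y\mid\mathbf{z}))\le |\mathcal{Z}|\log^2(|\mathcal{Y}|)/4$ and $V(\mathbf{p}(\bm{\epsilon}\mid\mathbf{z}))\le|\mathcal{Z}|/4$), not from a per-$\mathbf{z}$ union bound with a Bretagnolle--Huber--Carol inequality; and the threshold on $m$ is imposed so that the argument of the concave envelope $h(x)=x\log(1/x)$ stays below $1/\mathrm{e}$, not to keep empirical masses above $1/m$ for a Lipschitz bound.

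The ideal case is where your proposal has a genuine gap. The paper's entire mechanism for improving $\sqrt{|\mathcal{Z}|}$ to $\sqrt{\beta}$ is to insert the exogenous noise into the slicewise deviation, writing $\hat{p}(y\mid\mathbf{z})-p(y\mid\mathbf{z})=\sum_{\bm{\epsilon}}p(\bm{\epsilon}\mid\mathbf{z})\bigl(\hat{p}(y\mid\mathbf{z},\bm{\epsilon})-p(y\mid\mathbf{z},\bm{\epsilon})\bigr)$, so that Cauchy--Schwarz produces a factor $\sqrt{V(\mathbf{p}(\bm{\epsilon}\mid\mathbf{z}))}$; in general this is only bounded by $\sqrt{|\mathcal{Z}|}/2$, but when $\mathbf{Z}=\mathbf{pa_Y}$ the SCM independence $\mathbf{pa_Y}\perp\bm{\epsilon}_1$ gives $p(\bm{\epsilon}\mid\mathbf{z})=p(\bm{\epsilon})$ with $V(\mathbf{p}(\bm{\epsilon}))\le\beta$, and that single substitution is the whole content of case~2. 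Your alternative --- quantizing $\mathcal{Z}$ into $O(\beta)$ cells and appealing to an ``effective support size'' --- does not work: $\beta$ is the variance of a Gaussian perturbation, not a cardinality, and a covering number of the mass-carrying region at scale $\beta$ would scale \emph{inversely} with the perturbation scale (and with the ambient dimension), so there is no reason the count of cells is $O(\beta)$. Moreover, a quantization of $\mathcal{Z}$ would alter \emph{both} numerator terms, whereas the stated ideal-case bound still carries the untouched $\frac{1}{2}\sqrt{|\mathcal{Z}|}\log(|\mathcal{Y}|)$ summand; your closing appeal to Theorem~\ref{thm:sufficient_statistics} to ``rule out residual dependence on $|\mathcal{Z}|$'' is therefore not only unnecessary but inconsistent with the inequality you are trying to prove, which retains that dependence. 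The missing idea is the conditioning on $\bm{\epsilon}$ and the use of the independence assumption in Eq.~\ref{eq:scms}, not a covering argument.
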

\begin{remark}
The theorem provides a generalization bound under finite sample settings. It shows that when representation $\mathbf{Z}$ fully contains parent information $\mathbf{pa_Y}$, we achieve a sample complexity bound as $m \geq C \log (|\mathcal{Y}| / \delta)\beta\mathrm{e}^{2}$, where $\beta$ refers to Eq.\ref{eq:scms}. The minimum number of samples needed reduces from $|\mathcal{Z}|$ to $\beta$, which is a tighter bound since in most of cases we assume $|\mathcal{Z}|\gg\beta$. 
This shows that $\mathbf{z=pa_Y}$ gives the reduced sample complexity and tightened generalization bound. The theorem also serves as a general solution to causality prediction problems, supporting the claim that a better prediction is achieved with causal variables, compared to that with correlated variables.
\end{remark}
\section{Experiments}
In this section, we conduct extensive experiments to verify the effectiveness of our framework. In the following, we begin with the experiment setup, and then report and analyze the results.

\begin{table*}\label{tab:generalization}
\caption{Overall Results on Yahoo!R3-OOD, Yahoo!R3-i.i.d. and PCIC}
    \center
\small
\renewcommand\arraystretch{1.1}
\setlength{\tabcolsep}{3.3pt}
\begin{threeparttable}  
\scalebox{.95}{
    \begin{tabular}{c|c|cc|cc|cc|cc}
    \hline\hline
    Dataset& Method&\multicolumn{4}{c|}{p=$\infty$}&\multicolumn{4}{c}{p=2} \\ \hline
        & Metrics & AUC & ACC & advAUC & advACC & AUC & ACC & advAUC & advACC \\ \hline
        \multirow{7}{*}{Yahoo!R3-OOD}&base(robust) & 0.5 & 0.4508 & 0.5 & 0.4508 & 0.5 & 0.4545 & 0.5 & 0.4537 \\ 
        &base(standard) & 0.6198 & 0.6097 & 0.5212 & 0.5189 & 0.621 & 0.6099 & 0.5139 & 0.5188 \\
        &IB(standard) & 0.6181 & 0.6063 & 0.5333 & 0.5149 & 0.6184 & 0.6069 & 0.5431 & 0.5255 \\
        &r-CVAE(robust) & 0.6186 & 0.6235 & 0.5886 & 0.5912 & 0.6191 & 0.6241 & 0.5882 & 0.5907 \\
        &r-CVAE(standard) & 0.6253 & 0.6249 & 0.5855 & 0.5863 & 0.6233 & 0.6243 & 0.5865 & 0.5872 \\
        &CaRR(robust) & 0.6238 & \textbf{0.6284} & \textbf{0.5993} & \textbf{0.5999} & 0.6242 & \textbf{0.6307} & \textbf{0.6008} & \textbf{0.601} \\ 
        &CaRR(standard) & \textbf{0.629} & 0.6257 & 0.5966 & 0.5965 & \textbf{0.6276} & 0.6255 & 0.5917 & 0.5917 \\  \hline
        \multirow{7}{*}{Yahoo!R3-i.i.d.}&base(robust) & 0.5 & 0.6001 & 0.5 & 0.5997 & 0.5 & 0.6 & 0.5 & 0.6 \\ 
        &base(standard) & 0.7334 & 0.7483 & 0.6267 & 0.6251 & 0.7346 & 0.752 & 0.6260 & 0.6103 \\ 
        &IB(standard) & 0.7291 & 0.7513 & 0.6361 & 0.6721 & 0.7348 & 0.7521 & 0.6418 & 0.6775 \\ 
        &r-CVAE(robust) & 0.7341 & 0.7093 & 0.7180 & 0.7080 & 0.7376 & 0.7151 & 0.7194 & 0.7082 \\ 
        &r-CVAE(standard) & 0.7488 & 0.7515 & 0.7191 & 0.7072 & 0.7487 & 0.7529 & 0.7202 & 0.7099 \\
        &CaRR(robust) & {0.7378} & 0.7168 & \textbf{0.721} & \textbf{0.7107} & {0.7374} & {0.7158} & \textbf{0.7247} & \textbf{0.7159} \\ 
        &CaRR(standard) & \textbf{0.7497} & \textbf{0.7503} & 0.7191 & 0.7099 & \textbf{0.7493} & \textbf{0.7495} & 0.7188 & 0.7072 \\  
        \hline
        \multirow{7}{*}{PCIC}&base(robust) & 0.5534 & 0.5875 & 0.5388 & 0.6257 & 0.5605 & 0.6498 & 0.5264 & 0.6287 \\
        &base(standard) & 0.6177 & 0.6517 & 0.5231 & 0.589 & 0.6269 & 0.6615 & 0.519 & 0.5581 \\ 
        &IB(standard) & 0.6242 & 0.6532 & 0.5741 & 0.6199 & 0.6216 & 0.6537 & 0.5768 & 0.6233 \\ 
        &r-CVAE(robust) & 0.6363 & 0.6733 & 0.6088 & 0.6596 & 0.63 & 0.674 &  0.6187 & 0.6493 \\ 
        &r-CVAE(standard) & 0.6358 & 0.6779 & 0.6138 & 0.6601 & 0.6328 & 0.6725 & 0.5893 & 0.6429  \\ 
        &CaRR(robust) & {0.639} & 0.6761 & \textbf{0.6225} & { 0.6638} & {0.6363} & {0.6709} & \textbf{0.6332} & {0.6576} \\ 
        &CaRR(standard) & \textbf{0.6447} & \textbf{0.6817} & 0.6148 & \textbf{0.664} & \textbf{0.6416} & \textbf{0.6803} & 0.619 & \textbf{0.6625} \\ 
        \hline\hline
    \end{tabular}}   
\end{threeparttable}    
\label{tab:overall_yahoo}
\end{table*}

\subsection{Datasets}
{Our experiments are based on one synthetic and four real-world benchmarks. With the synthetic dataset, we evaluate our method in controlled manner under selected dataset. We follow the causal graph defined in Fig.\ref{fig:intro} (a) to build our synthetic simulator, on which we compare the representation learnt by our method with the ground truth under different $\beta$ degree. The detail of synthetic simulator is given in Appendix \ref{sup:exp_details}}.

We also evaluate our method on real-world benchmarks for the recommendation system.\textbf{Yahoo! R3}\footnote{https://webscope.sandbox.yahoo.com/catalog.php?datatype=r} is an online music recommendation dataset, which contains the user survey data and ratings for randomly selected songs. The dataset contains two parts: the uniform (OOD) set and the nonuniform (i.i.d.) set.
\textbf{Coat Shopping Dataset}\footnote{https://www.cs.cornell.edu/~schnabts/mnar/} is a commonly used dataset collected from web-shop rating on clothing. The self-selected ratings are the i.i.d. set and the uniformly selected ratings are the OOD set.
\textbf{PCIC}\footnote{https://competition.huaweicloud.com/} is a recently released dataset for debiased recommendation, where we are provided with the user preferences on uniformly exposed movies. {We also use \textbf{CelebA-anno} \footnote{http://mmlab.ie.cuhk.edu.hk/projects/CelebA.html} data as one of our benchmark dataset, which contains 8 selected manual labeled facial annotations on 202,599 human face images.}

\subsection{Experimental Setup}\label{sec:exp_steup}
\textbf{Implementation.}
We name our method as \textbf{CaRR}. All of objective functions are defined under information-theoretical formulation. We evaluate Eq.\ref{eq:l_rb} in two parts. The first positive part (Eq.\ref{eq:l_rb} (1)) is evaluated by the following parameterized objective \citep{alemi2016deep}:
\begin{equation}\label{eq:elbo}
\begin{split}
    &I(\mathbf{Z}'; Y)- \lambda I(\textbf{Z}; \mathbf{X}) \\
    \ge& \mathbb{E}_{D}[\mathbb{E}_{\mathbf{z'}\in\mathcal{B}(\mathbf{z}, \beta)}[\log p_{g}({y}|\mathbf{z'})]-\lambda\mathcal{D_\text{KL}}(q_{\bm{\phi}} (\mathbf{z}|\mathbf{x})||p_{\bm{\theta}}(\mathbf{z}))] 
\end{split}
\end{equation}
we use PGD attack \citep{madry2017towards} with $\infty$-norm and $2$-norm to get intervened $Z'$. We set $p_{\bm{\theta}}(\mathbf{z})$ as $\mathcal{N}(y, 1)$ to avoid trivial representations. For the negative term (Eq.\ref{eq:l_rb} (2)), let $\bar{\mathbf{z}} = \mathbf{z}+b$, and $b$ is hyperparameter denoting degree of bias. In our experiments, we set $b=0.8$ when $y=0$, $b=-0.8$ when $y=1$. Then we use 
negative cross entropy to approximate mutual information. More implementation details are shown in Appendix \ref{sup:exp_details}.
\begin{figure*}[htp]
\begin{center}
\centerline{\includegraphics[width=1\columnwidth]{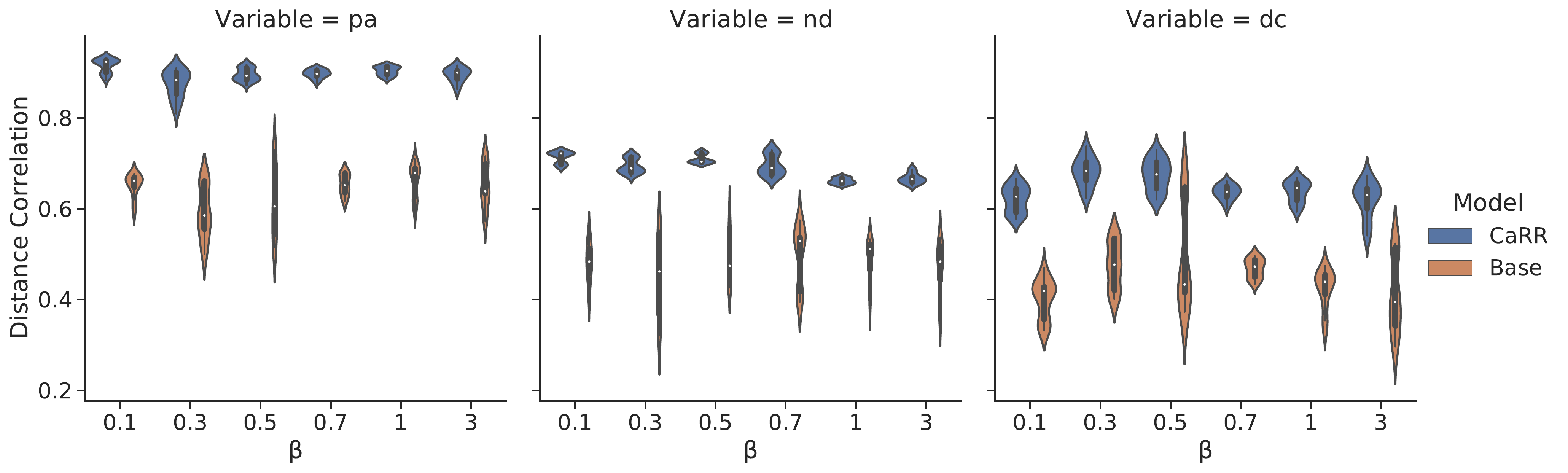}}
\caption{Representation learning results on synthetic dataset over different range of $\beta$, where $p=2$ under robust training.}
\label{fig:identify_result}
\end{center}
\vspace{-9mm}
\end{figure*}

\textbf{Compared Method}.  For all the compared methods, we use the same model architecture, with different training strategies. The model  consists of representation learning module $\mathbf{z}=\phi(\mathbf{x})$  and the downstream prediction module  $\mathbf{\hat{y}} = g(\mathbf{z})$, with each module implemented by neural networks. \textbf{Base} model has no additional constraints on representation, and the optimization is to minimize the cross-entropy between ${y}$ and learned ${\hat{y}}$. We  involve a recently proposed variational estimation with information bottleneck (\textbf{IB}) \citep{alemi2016deep}, extend the condition VAE (CVAE \citep{sohn2015learning}) by robust training process as \textbf{r-CVAE}, whose objective function is similar with CaRR but without a negative term (Eq.\ref{eq:l_rb} (2)). We conduct ablation studies by comparing our proposed method \textbf{CaRR} with the r-CVAE to evaluate the effectiveness of negative term.  We evaluate our method on two main aspects: (i) \textbf{Generalization} of the model under distribution shifts and (ii) \textbf{Robustness} under adversarial attack on representation space. 
For (i), we evaluate our method on OOD and i.i.d. setting on Yahoo! R3 and Coat. For (ii), the standard mode of adversarial attack  ($\beta=0$) means that we do not perturb original $\mathbf{z}$. In robust mode, we set $\beta=\{0.1, 0.2, 0.1, 0.3, 0.3\}$ for PCIC, Yahoo! R3, Coat, Synthetic and CelebA-anno respectively. 

\textbf{Metrics}. We use the common evaluation metrics AUC/ACC \citep{rendle2012bpr, gunawardana2009survey} on CTR prediction and their variants called adv-ACC/ adv-AUC \citep{madry2017towards} on advasarially perturbed evaluation dataset. {Moreover, we consider Distance Correlation metrics \cite{jones1995fitness} to evaluate the similarity between learned representation and parental information $\mathbf{pa_Y}$. }

\subsection{Overall Effectiveness}
Table \ref{tab:overall_yahoo} shows the overall results on Yahoo! R3 and PCIC. From Yahoo! R3 dataset, it contains both i.i.d. and OOD validation and test sets, we find that our method enjoys the better generalization. In Yahoo! R3 OOD, our method increases the performance by 1.9\%,  8.1\%, in terms of ACC and adv-ACC, compared with base method. The performance of r-CVAE is close to CaRR, since it is a modified version of our method, which only includes the positive term in Eq.\ref{eq:l_rb} but removes the negative term. The difference between performances of CaRR and r-CVAE shows the effectiveness of the negative term in the objective function of CaRR. In PCIC dataset, standard and robust modes of CaRR achieve the best AUC as 64.47\%, 63.9\% respectively, which validates the effectiveness of our idea. In the robust training mode, our method achieves the best performance in adversarial metrics. In PCIC dataset, our method reaches 62.25\%, which increases 8.37\% against base method on adv-AUC. Robust training of CaRR is also better than the standard training, winning with a margin around 1.42\%. The results show that the robust learning process with exogenous variables involved enhances the adversarial performance on perturbed samples. On the other hand, in standard training mode, CaRR achieves better adversarial performance than all baselines. The result supports that the causal representation our method learned is more robust.

\subsection{Representation Analysis}
{In this section, we study whether our method CaRR helps identifying the parental information from observational data. Fig.\ref{fig:identify_result} demonstrates the ability of the model on learning causal representations under different $\beta$ degree on synthetic dataset. The figure shows the distance correlation between the learned representation $\mathbf{z}$ and different parts of observational data, namely ($\mathbf{pa_Y, nd_Y, dc_Y}$). From Fig.\ref{fig:identify_result} (left), we find that our method learns a representation that is with the highest similarity, in comparison with base method under different values of $\beta$. It is an evidence that our method successfully identify the parental information from mixed observational data. The information from $\mathbf{nd_Y}$ and $\mathbf{dc_Y}$ are not considered as important as parental information from CaRR, and the  distance correlation metric corresponding to this part is slightly lower. We also find that the metric under CaRR gets lower variance, which shows the stable performance of CaRR. On the contrary, the distance correlation metric of base method is with high variance, which indicates the possible incapability of the base method on extracting the parental information from observations.}

\section{Conclusions}
In this paper, we deal with the problem of representation learning for robust deep models. We argue that when the observations contain mixed information of the labels, that is, the cause, effect and other unrelated variables, learning a representation largely preserving the cause information results in satisfactory generalizability of the models. By analyzing our hypothetical graphical model in information theory, we propose a causality-inspired representation learning method by regularized mutual information. Through counterfactual based model tuning, it achieves effective learning with guaranteed sample complexity reduction under certain assumptions. Extensive experiments on real data set show the effectiveness of our algorithm, verifying our claim of robust learning.
\newpage

\bibliography{citations.bib}
\bibliographystyle{iclr2019_conference}

\newpage
\onecolumn
\appendix

\section{Theoretical Proof} \label{sup:proof}
\subsection{Proof of Lemma \ref{lem:dpi}}

To start with, we firstly consider the base case (Fig.\ref{fig:intro} (a)). 

Let $I(\mathbf{X}; {Y})$ denote the mutual information between $\mathbf{X}$ and ${Y}$, and $H(\mathbf{X})$ denote the entropy of $\mathbf{X}$. From Data Processing Inequality (DPI), we get the the inequality of mutual information inside SCMs.

In confounded case shown in Fig.\ref{fig:intro} (b),  DPI does not apply since $\mathbf{dc_Y}$ is generated by both $\mathbf{pa_Y}$ and ${Y}$, $\mathbf{dc_Y}$. We thus decompose the mutual information in confounded case as
\begin{equation}
\begin{split}
\label{eq:mutual}
    &I(\mathbf{pa_Y; nd_Y, dc_Y})\\ =&I(\mathbf{pa_Y; nd_Y}, Y, \mathbf{dc_Y})-\underbrace{I(\mathbf{pa_Y; } Y|\mathbf{dc_Y, nd_Y})}_{\ge0}\\
    \le& I(\mathbf{pa_Y; nd_Y}, Y)+I(\mathbf{pa_Y; dc_Y}|Y, \mathbf{nd_Y}) \\
    =& I(\mathbf{pa_Y; nd_Y, }Y)+I(\mathbf{pa_Y;}\bm{\epsilon_3})=I(\mathbf{pa_Y; nd_Y, }Y)
\end{split}
\end{equation}

Compared with Eq.\ref{eq:dpi}, the right hand side of inequality in confounded case is with an additional mutual information term $I(\mathbf{pa_Y;}\bm{\epsilon_3})$. The $\mathbf{pa_Y}\perp{\bm{\epsilon_3}}$  term can be canceled out.
\subsection{Proof of Theorem \ref{thm:sufficient_statistics}}
The proof directly follows the following two lemmas. We denote  the set of probabilistic functions of $\mathbf{X}$ into an arbitrary target space as $\mathcal{F}(\mathbf{X})$, and as $\mathcal{S}({Y})$ the set of sufficient statistics for ${Y}$. Since $\mathbf{h}(\cdot)$ is an injective function, we have $I(Y; \mathbf{pa_y, nd_y, dc_Y}) = I(Y; \mathbf{X})$
\begin{lemma}
Let $\mathbf{Z}$ be a probabilistic function of $\mathbf{X}$. Then $\mathbf{Z}$ is a sufficient statistic for (${Y}$,$\mathbf{nd_Y}$) if and only if $I(Y, \mathbf{\mathbf{nd_Y}; pa_Y}) = I(Y,\mathbf{\mathbf{nd_Y}; \mathbf{Z}}) = \max_{\mathbf{Z'}\in \mathcal{F}(\mathbf{X})} I(Y, \mathbf{\mathbf{nd_Y}; \mathbf{Z'}})$
\end{lemma}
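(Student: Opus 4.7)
The plan is to prove the biconditional by combining DPI with the structural independencies encoded in the SCMs of Eq.~\ref{eq:scms}. First I will observe that for any $\mathbf{Z'} \in \mathcal{F}(\mathbf{X})$ the triple $(Y,\mathbf{nd_Y}) - \mathbf{X} - \mathbf{Z'}$ forms a Markov chain, so DPI gives $I(Y,\mathbf{nd_Y};\mathbf{Z'}) \le I(Y,\mathbf{nd_Y};\mathbf{X})$, with equality attained trivially at $\mathbf{Z'} = \mathbf{X}$. Thus the maximum in the statement equals $I(Y,\mathbf{nd_Y};\mathbf{X})$, and the lemma reduces to showing that $\mathbf{Z}$ is sufficient for $(Y,\mathbf{nd_Y})$ if and only if $I(Y,\mathbf{nd_Y};\mathbf{Z}) = I(Y,\mathbf{nd_Y};\mathbf{X})$, and that this common value coincides with $I(Y,\mathbf{nd_Y};\mathbf{pa_Y})$.

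For the ``if'' direction, sufficiency reads off directly from $p(\mathbf{x}\mid\mathbf{z},y,\mathbf{nd_y}) = p(\mathbf{x}\mid\mathbf{z})$ as the extra Markov chain $(Y,\mathbf{nd_Y}) - \mathbf{Z} - \mathbf{X}$, so a second application of DPI flips the inequality and forces equality. For the converse I would expand $I(Y,\mathbf{nd_Y};\mathbf{X},\mathbf{Z})$ via the chain rule in two ways, use the forward Markov chain to kill $I(Y,\mathbf{nd_Y};\mathbf{Z}\mid\mathbf{X}) = 0$, and conclude $I(Y,\mathbf{nd_Y};\mathbf{X}\mid\mathbf{Z}) = 0$, which is exactly the sufficiency condition $\mathbf{X} \perp (Y,\mathbf{nd_Y}) \mid \mathbf{Z}$.

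The more delicate step is tying the common maximum to $I(Y,\mathbf{nd_Y};\mathbf{pa_Y})$. Because $\mathbf{h}$ is injective, $\mathbf{pa_Y}$ is a deterministic function of $\mathbf{X}$ and so lies in $\mathcal{F}(\mathbf{X})$, which gives the upper bound $I(Y,\mathbf{nd_Y};\mathbf{pa_Y}) \le I(Y,\mathbf{nd_Y};\mathbf{X})$ for free. For the reverse, I would use the SCMs to argue that conditional on $\mathbf{pa_Y}$ the residual variation in $\mathbf{X}$ relative to $(Y,\mathbf{nd_Y})$ is driven only by the exogenous noises $\bm{\epsilon}_1,\bm{\epsilon}_2,\bm{\epsilon}_3$, which are independent of $\mathbf{pa_Y}$; under the deterministic-SCM regime emphasized in the remark following Lemma~\ref{lem:dpi}, this yields the Markov chain $(Y,\mathbf{nd_Y}) - \mathbf{pa_Y} - \mathbf{X}$, and DPI supplies the matching lower bound.

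The hard part will be precisely this last step. In the genuinely stochastic setting $\mathbf{X}$ contains $\mathbf{nd_Y}$ directly as a coordinate and $\mathbf{dc_Y} = g_3(Y,\bm{\epsilon}_3)$ additionally leaks information about $Y$, so $I(Y,\mathbf{nd_Y};\mathbf{X})$ strictly exceeds $I(Y,\mathbf{nd_Y};\mathbf{pa_Y})$ in general; the lemma as stated implicitly invokes the deterministic-SCM regime in which the inequalities of Lemmas~\ref{lem:dpi} and~\ref{lem:ine_i} collapse to equalities, so the write-up should either restrict to that regime explicitly or weaken the final equality to the corresponding DPI inequality.
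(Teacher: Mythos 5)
Your argument is essentially the paper's own proof: the authors likewise establish the maximum via the Markov chain $(Y,\mathbf{nd_Y})-\mathbf{X}-\mathbf{Z'}$ and DPI, obtain the ``if'' direction from the reverse chain $(Y,\mathbf{nd_Y})-\mathbf{Z}-\mathbf{X}$, and get the converse from equality in DPI forcing $I(Y,\mathbf{nd_Y};\mathbf{X}\mid\mathbf{Z})=0$, citing it as an extension of Lemma~12 of Shamir et al. Your closing caveat is also well taken: the paper ties $I(Y,\mathbf{nd_Y};\mathbf{pa_Y})$ into the equality chain only via the remark that $\mathbf{h}$ is injective, which does not by itself yield the Markov chain $(Y,\mathbf{nd_Y})-\mathbf{pa_Y}-\mathbf{X}$ in the stochastic setting (since $\mathbf{X}$ contains $\mathbf{nd_Y}$ and $\mathbf{dc_Y}$ outright), so that part of the statement indeed holds only in the deterministic, zero-noise regime you identify.
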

\begin{proof}
The proof of Lemma is an extension of  Lemma 12 in \citet{shamir2010learning}. The difference is that we focus on the joint variables of $Y\mathbf{, nd_Y}$ rather than ${Y}$. For every $\mathbf{Z'}$
which is a probabilistic function of $\mathbf{X}$, we have Markov Chain  $Y,\mathbf{ nd_Y} - \mathbf{X} - \mathbf{Z}'$, so we have $I(Y, \mathbf{nd_Y}; \mathbf{X}) \ge I(Y, \mathbf{nd_Y}; \mathbf{Z'})$. 
We also have Markov Chain  $Y, \mathbf{nd_Y} - \mathbf{Z} - \mathbf{X} $, so we have $I(Y, \mathbf{nd_Y}; \mathbf{X}) \le I(Y, \mathbf{nd_Y}; \mathbf{Z})$
Then assume $I(Y, \mathbf{nd_Y}; \mathbf{Z}) = I(Y, \mathbf{nd_Y}; \mathbf{X})$. Since we also have Markov Chain  $Y, \mathbf{nd_Y} - \mathbf{X} - \mathbf{Z}$, it follows that $Y, \mathbf{nd_Y}$ and $\mathbf{X}$ are conditionally independent given $\mathbf{Z}$, and hence $\mathbf{Z}$ is a sufficient statistic.

\end{proof}

\begin{lemma} 
Let $\mathbf{Z}$ be sufficient statistics of  ${Y}$. Then $\mathbf{Z}$ is minimal sufficient statistic for $\mathbf{Y}$ if and only if $I(\mathbf{nd_Y, dc_Y; pa_Y}) = I(\mathbf{nd_Y, dc_Y; Z}) = \max_{\mathbf{Z'}\in \mathcal{S}(Y, \mathbf{nd_Y})} I(\mathbf{nd_Y, dc_Y; Z'})$
\end{lemma}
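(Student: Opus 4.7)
The plan follows the template of the preceding sub-lemma, now with target variables $(\mathbf{nd_Y},\mathbf{dc_Y})$ in place of $(Y,\mathbf{nd_Y})$. The tools on which I would lean are: (i) the paper's Definition of minimality, that every sufficient statistic factors through the minimal one via a deterministic map; (ii) the Data Processing Inequality along the Markov chains produced by that factorization; and (iii) the SCM in Eq.\ref{eq:scms}, which pins $\mathbf{pa_Y}$ as the generating root of both $Y$ and $\mathbf{nd_Y}$. I would first dispatch the equality $I(\mathbf{nd_Y},\mathbf{dc_Y};\mathbf{Z}) = I(\mathbf{nd_Y},\mathbf{dc_Y};\mathbf{pa_Y})$ by verifying that $\mathbf{pa_Y}$ is itself a minimal sufficient statistic for $(Y,\mathbf{nd_Y})$: conditioning on $\mathbf{pa_Y}$ makes $(Y,\mathbf{nd_Y})$ independent of $\mathbf{X}$ by Eq.\ref{eq:scms} (sufficiency), while any strict coarsening destroys information about $Y$ because $\mathbf{g_1}$ is non-degenerate (minimality). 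Combined with the assumed minimality of $\mathbf{Z}$ and the classical uniqueness of minimal sufficient statistics up to measurable bijection, this transports any mutual-information quantity freely between $\mathbf{Z}$ and $\mathbf{pa_Y}$.

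The main obstacle will be the extremal identity $I(\mathbf{nd_Y},\mathbf{dc_Y};\mathbf{pa_Y}) = \max_{\mathbf{Z'}\in\mathcal{S}(Y,\mathbf{nd_Y})} I(\mathbf{nd_Y},\mathbf{dc_Y};\mathbf{Z'})$. Minimality gives $\mathbf{pa_Y}=f(\mathbf{Z'})$ for every sufficient $\mathbf{Z'}$, and the chain rule
\[
I(\mathbf{nd_Y},\mathbf{dc_Y};\mathbf{Z'}) = I(\mathbf{nd_Y},\mathbf{dc_Y};\mathbf{pa_Y}) + I(\mathbf{nd_Y},\mathbf{dc_Y};\mathbf{Z'}\mid\mathbf{pa_Y}),
\]
(the cross term $I(\mathbf{nd_Y},\mathbf{dc_Y};\mathbf{pa_Y}\mid\mathbf{Z'})$ vanishes because $\mathbf{pa_Y}$ is a deterministic function of $\mathbf{Z'}$) together with non-negativity of mutual information drives the natural inequality $I(\mathbf{nd_Y},\mathbf{dc_Y};\mathbf{Z'}) \ge I(\mathbf{nd_Y},\mathbf{dc_Y};\mathbf{pa_Y})$, i.e.\ $\mathbf{pa_Y}$ sits at the infimum of $\mathcal{S}(Y,\mathbf{nd_Y})$ rather than at the supremum. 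Recovering the stated max therefore hinges on forcing the residual term $I(\mathbf{nd_Y},\mathbf{dc_Y};\mathbf{Z'}\mid\mathbf{pa_Y})$ to vanish for every sufficient $\mathbf{Z'}$.

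The most plausible route through this obstacle is to invoke the deterministic-SCM regime ($H(\bm{\epsilon_i})=0$) that Lemma \ref{lem:dpi} already leverages for its equality case. In that regime $\mathbf{nd_Y}=\mathbf{g_2}(\mathbf{pa_Y})$ and $\mathbf{dc_Y}=\mathbf{g_3}(\mathbf{g_1}(\mathbf{pa_Y}))$ are deterministic functions of $\mathbf{pa_Y}$, so conditioning on $\mathbf{pa_Y}$ pins $(\mathbf{nd_Y},\mathbf{dc_Y})$ and makes $I(\mathbf{nd_Y},\mathbf{dc_Y};\mathbf{Z'}\mid\mathbf{pa_Y})=0$ for every $\mathbf{Z'}$, so the chain-rule identity collapses to the desired equality with $\mathbf{pa_Y}$ and the max is attained simultaneously as the min. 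Outside that regime the conditional term is strictly positive in general, which I expect to be the genuine sticking point: the argument will either have to be read as an asymptotic statement in the deterministic limit, or the set over which the extremum is taken will have to be restricted (for instance to sufficient statistics that are themselves coarsenings consistent with the SCM). Identifying exactly which restriction makes the $\max$ tight is where the bulk of the proof effort would go.

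The reverse direction is the easier half: if $\mathbf{Z}$ attains the three-way equality but fails to be minimal, there exists a strictly coarser sufficient $\mathbf{Z}^\star=g(\mathbf{Z})$, and DPI along the chain $(\mathbf{nd_Y},\mathbf{dc_Y})-\mathbf{Z}-\mathbf{Z}^\star$ combined with the assumed equality pinches $I(\mathbf{nd_Y},\mathbf{dc_Y};\mathbf{Z}^\star)=I(\mathbf{nd_Y},\mathbf{dc_Y};\mathbf{Z})$, forcing $g$ to be a measurable bijection and contradicting strict coarsening. Chaining this with the forward direction closes the biconditional.
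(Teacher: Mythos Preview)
Your core mechanism---minimality gives $\mathbf{Z}=f(\mathbf{Z'})$ for every sufficient $\mathbf{Z'}$, hence a Markov chain through which DPI pushes $I(\mathbf{nd_Y},\mathbf{dc_Y};\mathbf{Z})\le I(\mathbf{nd_Y},\mathbf{dc_Y};\mathbf{Z'})$---is exactly what the paper does. Its proof is a two-line sketch: from $\mathbf{Z}=f(\mathbf{Z'})$ it asserts the chain $(\mathbf{nd_Y},\mathbf{dc_Y})-Y-\mathbf{Z'}-\mathbf{Z}$, applies DPI to get the $\le$ inequality, and then defers the remainder to Lemma~13 of Shamir~(2010).

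Where you diverge is in taking the ``$\max$'' in the lemma statement at face value and then spending the bulk of your plan trying to force equality via the deterministic-SCM limit. That detour is unnecessary: the ``$\max$'' is a typo for ``$\min$''. This is clear from Theorem~\ref{thm:sufficient_statistics}, which the two sub-lemmas are meant to prove and which reads $\mathbf{pa_Y}=\argmin_{\mathbf{Z}} I(\mathbf{Z};\mathbf{nd_Y},\mathbf{dc_Y})$ subject to the sufficiency constraint; it is also clear from the paper's own proof, whose DPI step yields $\le$ (i.e.\ the minimal sufficient statistic sits at the \emph{infimum}), and from the cited Shamir lemma, which characterizes minimality by minimizing $I(\mathbf{X};\mathbf{Z})$ over sufficient statistics. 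Your chain-rule computation showing $I(\mathbf{nd_Y},\mathbf{dc_Y};\mathbf{Z'})\ge I(\mathbf{nd_Y},\mathbf{dc_Y};\mathbf{pa_Y})$ is therefore already the correct direction and the desired conclusion; the deterministic-regime argument to collapse the residual $I(\mathbf{nd_Y},\mathbf{dc_Y};\mathbf{Z'}\mid\mathbf{pa_Y})$ is not needed and would not hold outside that regime, as you yourself note.

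So: drop the deterministic-limit machinery, read the extremum as a minimum, and your DPI argument together with your reverse-direction contradiction is essentially the paper's proof, only spelled out more carefully than the paper bothers to.
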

\begin{proof}
Firstly, let $\mathbf{Z}$ be a minimal sufficient statistic, and  $\mathbf{Z}'$
be some sufficient statistic. Because there is a function $\mathbf{Z} = f(\mathbf{Z'})$, we have Markov Chain $(\mathbf{nd_Y, dc_Y}) - {Y} - \mathbf{Z'} - \mathbf{Z}$, and we  get $I(\mathbf{nd_Y, dc_Y; Z}) \le I(\mathbf{nd_Y, dc_Y;Z})$. Similarly, following the proof of Lemma 13 in \citet{shamir2010learning}, we get the above Lemma.
\end{proof}

\subsection{Proof of Lemma \ref{lem:ine_i}}
\begin{proof}

For the Lemma \ref{lem:ine_i} (1) is hold since
\begin{equation}
\begin{split}
    I(\mathbf{pa_Y;nd_Y, dc_Y})\le I(\mathbf{pa_Y;nd_Y, dc_Y, pa_Y})\le I(\mathbf{pa_Y;X}).
\end{split}
\end{equation}

For Lemma \ref{lem:ine_i} (2) in main text. 

Firstly, we introduce Kolmogorove Complexity $K(\mathbf{x})$, which describes the shortest description length of string $\mathbf{x}$. Previous works give technical results about Kolmogorove Complexity in causality \citep{janzing2010causal}. 
\begin{definition}
[Algorithmic Mutual Information]

Let $x, y$ be two strings. Then the algorithmic mutual information of $x, y$ is:

\begin{equation}
    I(\mathbf{x}:\mathbf{y}) :\overset{+}{=} K(\mathbf{y}) - K
    (\mathbf{y}|\mathbf{x}^*).
\end{equation}
The mutual information is the number of bits that can be saved in the description of $\mathbf{y}$ when the shortest description of $\mathbf{x}$ is already known.
\end{definition}

\begin{lemma}[Entropy and Kolmogorov Complexity \citep{vapnik1999overview}]\label{lem:entropy}
Let $\mathbf{x}=\mathbf{x}_1,\mathbf{x}_2\cdots, \mathbf{x}_n$ be a sting whose symbols $\mathbf{x}_j \in \mathcal{X}$ are drawn i.i.d. from a probability distribution $P(\mathbf{X})$ over the finite alphabet $\mathcal{{X}}$. Slightly overloading notation, set $P(\mathbf{x}) := P(\mathbf{x}_1)\cdots P(\mathbf{x}_n)$. Let $H(\cdot)$ denote the Shannon entropy of a probability distribution. Then there is a constant $c$ for $\forall n$ such that
\begin{equation}
    H(P(\mathbf{X}))\le \frac{1}{n}E(K(\mathbf{x}|n))\le H(P(\mathbf{X})) + \frac{|\mathcal{X}|\log n}{n} + \frac{c}{n}
\end{equation}
where $E(\cdot)$ is short hand for the expected value with respect to $P(\mathbf{X})$. Hence
\begin{equation}
\lim_{n\rightarrow \infty}\frac{1}{n}E(K(\mathbf{x})) = H(\mathbf{X})
\end{equation}
\end{lemma}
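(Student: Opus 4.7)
The plan is to prove the two-sided bound on $\frac{1}{n}E(K(\mathbf{x}|n))$ by treating its lower and upper halves separately, using classical tools from algorithmic and Shannon information theory. The lower bound is a coding/counting argument based on Kraft's inequality; the upper bound is a universal two-part code that transmits the empirical type followed by an index within the type class. The stated limit then falls out from the squeeze theorem.

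For the lower bound $H(P(\mathbf{X})) \le \frac{1}{n} E(K(\mathbf{x}|n))$, I would first observe that prefix-free descriptions satisfy Kraft's inequality, so $q(\mathbf{x}) := 2^{-K(\mathbf{x}|n)} / Z$ with $Z = \sum_{\mathbf{x}\in\mathcal{X}^n} 2^{-K(\mathbf{x}|n)} \le 1$ defines a (sub)probability distribution on $\mathcal{X}^n$. By non-negativity of the Kullback--Leibler divergence $\KL(P\,\|\,q) \ge 0$, one gets the cross-entropy inequality $E[-\log q(\mathbf{x})] \ge H(P(\mathbf{x}))$, i.e. $E[K(\mathbf{x}|n)] + \log(1/Z) \ge nH(P(\mathbf{X}))$ by i.i.d.-ness. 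Since $\log(1/Z)\ge 0$, dividing by $n$ yields the desired bound.

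For the upper bound, I would exhibit a concrete description that the universal Turing machine can decode given $n$. Given $\mathbf{x}$, let $\hat P_n$ be its empirical distribution (the type). Encode $\mathbf{x}$ in two parts: (i) the integer count vector $(n_1,\dots,n_{|\mathcal{X}|})$ using a self-delimiting encoding of length at most $|\mathcal{X}|\log(n+1)+O(1)$ bits, and (ii) the lexicographic index of $\mathbf{x}$ within its type class, which by the standard type-class size bound $\binom{n}{n_1,\dots,n_{|\mathcal{X}|}} \le 2^{nH(\hat P_n)}$ requires at most $nH(\hat P_n)$ bits. Hence $K(\mathbf{x}|n) \le nH(\hat P_n) + |\mathcal{X}|\log(n+1) + c$ for an absolute constant $c$ depending only on the reference machine. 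Taking expectations and using concavity of the entropy functional (Jensen's inequality, since $E[\hat P_n] = P$) gives $E[H(\hat P_n)] \le H(P(\mathbf{X}))$, so $E[K(\mathbf{x}|n)] \le nH(P(\mathbf{X})) + |\mathcal{X}|\log(n+1) + c$. Dividing by $n$ and absorbing the $\log(n+1)$ vs $\log n$ difference into the constant completes the bound. The limit statement then follows because both sandwiching terms converge to $H(P(\mathbf{X}))$ as $n\to\infty$, and going from $K(\mathbf{x}|n)$ to $K(\mathbf{x})$ costs at most $O(\log n)$ extra bits to self-delimit $n$, which vanishes after dividing by $n$.

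The main obstacle I anticipate is the bookkeeping for the two-part code: one has to guarantee that the concatenation of the type description and the index is uniquely parseable by the universal machine, which forces the type-count block to be self-delimiting rather than merely short. Fixed-width encoding of each count in $\lceil\log(n+1)\rceil$ bits together with a single prefix declaring the alphabet size handles this cleanly, and produces the $|\mathcal{X}|\log n$ term in the lemma; the leftover machine-dependent overhead collapses into the additive constant $c$. The rest is routine manipulation of Shannon entropy and expectations over the i.i.d.\ product distribution.
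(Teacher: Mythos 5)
You should first be aware that the paper never proves this lemma: it is quoted as a known result with a citation and used as a black box inside the proof of Lemma 2, so there is no in-paper argument to compare against. Your proposal supplies the standard textbook proof (essentially the Cover--Thomas theorem relating expected Kolmogorov complexity to Shannon entropy): the lower bound from Kraft's inequality plus non-negativity of relative entropy, and the upper bound from a two-part code sending the empirical type in $|\mathcal{X}|\log(n+1)+O(1)$ self-delimiting bits and the index within the type class in at most $nH(\hat P_n)$ bits, with Jensen's inequality for the concave entropy functional giving $E[H(\hat P_n)]\le H(P(\mathbf{X}))$. This is the right argument and it establishes both the two-sided bound and the limit (the passage from $K(\mathbf{x}|n)$ to $K(\mathbf{x})$ costing only $O(\log n)$ extra bits is also handled correctly). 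Two small points. First, a sign slip in the lower bound: with $q(\mathbf{x})=2^{-K(\mathbf{x}|n)}/Z$ one has $-\log q(\mathbf{x})=K(\mathbf{x}|n)-\log(1/Z)$, so the cross-entropy inequality reads $E[K(\mathbf{x}|n)]-\log(1/Z)\ge nH(P(\mathbf{X}))$, and the conclusion follows because the nonnegative term $\log(1/Z)$ is being added back; as you wrote it, $E[K(\mathbf{x}|n)]+\log(1/Z)\ge nH(P(\mathbf{X}))$ together with $\log(1/Z)\ge 0$ does not by itself yield $E[K(\mathbf{x}|n)]\ge nH(P(\mathbf{X}))$. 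Second, the Kraft step presupposes prefix (self-delimiting) complexity; if $K$ denotes plain complexity one replaces it by the counting bound on the number of programs of each length, at the cost of an additive constant. Neither point changes the substance: your proof is correct once the sign is repaired, and it is the proof the cited literature has in mind.
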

\begin{lemma} (Recursive Form \citep{janzing2010causal})
Given the strings $x_1,\cdots, x_n$ and a directed acyclic graph $G$. Then the Kolmogrove Complexity has the recursive form:
\begin{equation}
    \begin{split}
        K\left(\mathbf{x}_{1}, \ldots, \mathbf{x}_{n}\right) :\overset{+}{=} \sum_{j=1}^{n} K\left(\mathbf{x}_{j} \mid \mathbf{pa}_{j}^{*}\right)
    \end{split}
\end{equation}

\end{lemma}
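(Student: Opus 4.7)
The plan is to combine the chain rule for Kolmogorov complexity with the algorithmic causal Markov condition (CMC) for the DAG $G$. First, I would enumerate the variables in a topological order consistent with $G$, so that $\mathbf{pa}_j \subseteq \{x_1, \ldots, x_{j-1}\}$ for every $j$. The standard chain rule for Kolmogorov complexity, which follows from Kolmogorov--Levin symmetry of information, then gives
$$K(x_1, \ldots, x_n) \overset{+}{=} \sum_{j=1}^{n} K(x_j \mid x_1^*, \ldots, x_{j-1}^*),$$
where each $x_i^*$ denotes the shortest description of $x_i$, and equality is up to an additive constant independent of the strings.

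Second, I would invoke the algorithmic CMC, the algorithmic analogue of the probabilistic causal Markov condition: conditional on the shortest descriptions of its parents in $G$, each $x_j$ is algorithmically independent of the remaining non-descendants among the earlier variables. Concretely, this postulate yields
$$K(x_j \mid x_1^*, \ldots, x_{j-1}^*) \overset{+}{=} K(x_j \mid \mathbf{pa}_j^*)$$
for every $j$, since the non-parent ancestors contribute no further algorithmic information once $\mathbf{pa}_j^*$ is available. Substituting this identity into the chain rule and summing over $j$ produces the claimed recursive form, with the additive constants absorbed into the $\overset{+}{=}$ notation.

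The main obstacle is the second step: the algorithmic CMC is not derivable from Kolmogorov complexity alone but is taken as a postulate of algorithmic causality (see Janzing and Sch\"olkopf). The cleanest route is therefore to cite this postulate explicitly and then verify that each conditional description length collapse incurs only an $O(1)$ overhead, which is consistent with $\overset{+}{=}$. A subtle bookkeeping point is that one must consistently condition on shortest descriptions $x_i^*$ rather than raw strings $x_i$, since the chain rule above is only an approximate equality if raw strings replace minimal descriptions; handling this correctly is what keeps all error terms bounded by a universal additive constant.
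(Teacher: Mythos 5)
The paper does not actually prove this lemma; it is imported verbatim by citation from Janzing and Sch\"olkopf's work on the algorithmic Markov condition, so there is no in-paper argument to compare against. Your reconstruction is faithful to the derivation in that reference: the chain rule $K(x_1,\ldots,x_n) \overset{+}{=} \sum_j K(x_j \mid x_1^*,\ldots,x_{j-1}^*)$ via Kolmogorov--Levin symmetry of information, followed by the collapse of the conditioning set to $\mathbf{pa}_j^*$, is exactly how the recursion formula is obtained there, and you correctly identify the one point that matters: the second step is not a theorem of algorithmic information theory but the algorithmic causal Markov condition taken as a postulate (in the source it is one of three provably equivalent formulations of that postulate). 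Your remark about conditioning on shortest descriptions $x_i^*$ rather than raw strings, so that all error terms stay $O(1)$, is also the right bookkeeping and is precisely why the $\overset{+}{=}$ notation is needed.
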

\begin{equation}
    \begin{split}
        I(\mathbf{nd_y}, y:\mathbf{pa_y}) :\overset{+}{=}& K(\mathbf{nd_y}, y) - K(\mathbf{nd_y}, y|\mathbf{pa_y}^*)\\
        :\overset{+}{=}&K(\mathbf{nd_y}, y)-K(\mathbf{nd_y}|\mathbf{pa_y}^*) - K(y|\mathbf{pa_y}^*)\\
        :\overset{+}{=}&K(y)+K(\mathbf{d_y})-I(\mathbf{nd_y}: y) \\
        &-K(\mathbf{nd_y}|\mathbf{pa_y}^*) - K(y|\mathbf{pa_y}^*)\\
        :\overset{+}{=}&I(y:\mathbf{pa_y}) + \underbrace{I(\mathbf{nd_y}:\mathbf{pa_y}) -I(\mathbf{nd_y}:y)}_{\ge 0}\\
        :\overset{+}{\ge}& I(y:\mathbf{pa_y})
    \end{split}
\end{equation}

Lemma \ref{lem:entropy} already shows that 
\begin{equation}
    \lim_{n\rightarrow\infty} \frac{1}{n}E(I(\mathbf{nd_y}, y:\mathbf{pa_y}) = I(\mathbf{nd_y}, y;\mathbf{pa_y},
    \lim_{n\rightarrow\infty} \frac{1}{n}E(y:\mathbf{pa_y}) = I(y;\mathbf{pa_y})\\
\end{equation}
We can accomplish the proof of Lemma \ref{lem:ine_i}
\end{proof}
\subsection{Proof of Theorem \ref{thm:sample_complexity}}
The proof follows  \cite{shamir2010learning} Theorem 3. The sketch of proof contains two steps: (i) we decompose the original objective $|I(Y;\mathbf{Z})- \hat{I}(Y;\mathbf{Z})|$ into two parts. (ii) for each part, we deduce the deterministic finite sample bound by concentration
of measure arguments on L2 norms of random vector.
\begin{equation}
\begin{split}
    |I(Y;\mathbf{Z})- \hat{I}(Y;\mathbf{Z})|\le|H(Y|\mathbf{Z})-\hat{H}(Y|\mathbf{Z})|+|H(Y)-\hat{H}(Y)|
\end{split}
\end{equation}
Let $h(x)$ denote a continuous, monotonically increasing and concave function.
\begin{equation}
\begin{split}
    h({x})= \begin{cases}0 & x=0 \\ {x} \log (1 / {x}) & 0<{x} \leq 1 / \mathrm{e} \\ 1 / \mathrm{e} & x>1 / \mathrm{e}\end{cases}
\end{split}
\end{equation}
for the term $|H(Y|Z)-\hat{H}(Y|Z)|$
\begin{equation}\label{eq:separate}
\begin{split}
    |H(Y|\mathbf{Z})-\hat{H}(Y|\mathbf{Z})|&=\left|\sum_{\mathbf{z}}(p(\mathbf{z}) H(Y \mid \mathbf{z})-\hat{p}(\mathbf{z}) \hat{H}(Y \mid \mathbf{z}))\right| \\
& \leq\left|\sum_{\mathbf{z}} p(\mathbf{z})(H(Y \mid \mathbf{z})-\hat{H}(Y \mid \mathbf{z}))\right|+\left|\sum_{\mathbf{z}}(p(\mathbf{z})-\hat{p}(\mathbf{z})) \hat{H}(Y \mid \mathbf{z})\right|
\end{split}
\end{equation}
For the  first summand in this bound, we introduce variable $\bm{\epsilon}$ to help decompose $p(y|\mathbf{z})$, where $\bm{\epsilon}$ is independent with the parents $\mathbf{pa_y}$ (i.e. $\bm{\epsilon}\perp\mathbf{pa_y}$)
\begin{equation}
\begin{split}
    &\left|\sum_{\mathbf{z}} p(\mathbf{z})(H(Y \mid \mathbf{z})-\hat{H}(Y \mid \mathbf{z}))\right|\\
    & \leq\left|\sum_{\mathbf{z}} p(\mathbf{z}) \sum_{{y}}(\hat{p}({y} \mid \mathbf{z}) \log (\hat{p}({y} \mid \mathbf{z}))-p(y \mid \mathbf{z}) \log (p(y \mid \mathbf{z})))\right| \\
& \leq \sum_{\mathbf{z}} p(\mathbf{z}) \sum_{y} h(|\hat{p}(y \mid \mathbf{z})-p(y \mid \mathbf{z})|) \\
&=\sum_{\mathbf{z}} p(\mathbf{z}) \sum_{y} h\left(\left|\sum_{\bm{\epsilon}} p(\bm{\epsilon} \mid \mathbf{z})(\hat{p}(y \mid \mathbf{z}, \bm{\epsilon})-p(y \mid \mathbf{z}, \bm{\epsilon}))\right|\right) \\
&=\sum_{\mathbf{z}} p(\mathbf{z}) \sum_{y} h(\|\hat{\mathbf{p}}(y \mid \mathbf{z}, \bm{\epsilon})-\mathbf{p}(y \mid \mathbf{z}, \bm{\epsilon})\| \sqrt{V(\mathbf{p}(\bm{\epsilon} \mid \mathbf{z}))})\\
\end{split}
\end{equation}
where $\frac{1}{m}V(x)$ denote the variance of vector $x$. For the second summand in Eq.\ref{eq:separate}.
\begin{equation}
\begin{split}
    \left|\sum_{\mathbf{z}}(p(\mathbf{z})-\hat{p}(z)) \hat{H}(Y \mid z)\right| \leq\|\mathbf{p}(\mathbf{z})-\hat{\mathbf{p}}(\mathbf{z})\| \cdot \sqrt{V(\hat{\mathbf{H}}(Y \mid \mathbf{z}))}
\end{split}
\end{equation}
For the summand $|H(Y)-\hat{H}(Y)|$:
\begin{equation}
\begin{split}
|H(Y)-\hat{H}(Y)| &=\left|\sum_{y} p(y) \log (p(y))-\hat{p}(y) \log (\hat{p}(y))\right| \\
& \leq \sum_{y} h(|p(y)-\hat{p}(y)|) \\
&=\sum_{y} h\left(\left|\sum_{\mathbf{z}} \sum_{\bm{\epsilon}}p(\bm{\epsilon} \mid \mathbf{z})(p(\mathbf{z})p( y|\bm{\epsilon})-\hat{p}(\mathbf{z})p( y|\bm{\epsilon}))\right|\right) \\
& \leq \sum_{y} h(\|\mathbf{p}(\mathbf{z})p(y|\bm{\epsilon})-\hat{\mathbf{p}}(\mathbf{z})p(y|\bm{\epsilon})\| \sqrt{V(\mathbf{p}(\bm{\epsilon} \mid \mathbf{z}))})
\end{split}
\end{equation}

Combining above bounds:
\begin{equation}
\begin{split}
    |I(Y;\mathbf{Z})- \hat{I}(Y;\mathbf{Z})|\le&\sum_{y} h(\|\mathbf{p}(\mathbf{z}, y|\bm{\epsilon})-\hat{\mathbf{p}}(\mathbf{z}, y|\bm{\epsilon})\|\sqrt{V(\mathbf{p}(\bm{\epsilon} \mid \mathbf{z}))}) \\
    &+\sum_{\mathbf{z}} p(\mathbf{z}) \sum_{y} h(\|\hat{\mathbf{p}}(y \mid \mathbf{z}, \epsilon)-\mathbf{p}(y \mid \mathbf{z}, \bm{\bm{\epsilon}})\| \sqrt{V(\mathbf{p}(\bm{\bm{\epsilon}} \mid z))})\\
    &+\|\mathbf{p}(\mathbf{z})-\hat{\mathbf{p}}(\mathbf{z})\| \cdot \sqrt{V(\hat{\mathbf{H}}(Y \mid \mathbf{z}))}
\end{split}
\end{equation}
Let $\bm{\rho}$ be a distribution vector of arbitrary cardinality, and let $\hat{\bm{\rho}}$ be an empirical estimation of $\bm{\rho}$ based on a sample of size $m$. Then the error $\|\bm{\rho}-\hat{\bm{\rho}}\|$ will be bounded  with a probability of at least $1-\delta$
\begin{equation}\label{eq:delta_}
\begin{split}
\|\bm{\rho}-\hat{\bm{\rho}}\| \leq \frac{2+\sqrt{2 \log (1 / \delta)}}{\sqrt{m}}
\end{split}
\end{equation}
Following the proof of Theorem 3 in \cite{shamir2010learning}, to make sure the bounds hold over $|\mathcal{Y}| + 2|$  quantities, we replace $\delta$ in Eq.\ref{eq:delta_} by $\delta/(|\mathcal{Y}| + 2|$, than substitute $\|\mathbf{p}(\mathbf{z}, y|\bm{\epsilon})-\hat{\mathbf{p}}(\mathbf{z}, y|\bm{\epsilon})\|$ $\|\hat{\mathbf{p}}(y \mid \mathbf{z}, \bm{\epsilon})-\mathbf{p}(y \mid \mathbf{z}, \bm{\epsilon})\|, \|\mathbf{p}(\mathbf{z})-\hat{\mathbf{p}}(\mathbf{z})\|, $ by Eq.\ref{eq:delta_}.
\begin{equation}\label{eq:i_y_z}
\begin{split}
    |I(Y ; \mathbf{Z})-\hat{I}(Y ; \mathbf{Z})| \leq &(2+\sqrt{2 \log ((|\mathcal{Y}|+2) / \delta)}) \sqrt{\frac{V(\hat{\mathbf{H}}(Y \mid \mathbf{z}))}{m}} \\
    &+2 |\mathcal{Y}| h\left(2+\sqrt{2 \log ((|\mathcal{Y}|+2) / \delta)} \sqrt{\frac{V(\mathbf{p}(\bm{\epsilon} \mid\mathbf{z}))}{m}}\right)\\
\end{split}
\end{equation}
There exist a constant $C$, where $2+\sqrt{2 \log ((|\mathcal{Y}|+2) / \delta)}\le\sqrt{C\log ((|\mathcal{Y}|) / \delta)}$. 
From the fact that variance of any random variable bounded in [0, 1] is at most 1/4, we analyze the bound under two different cases:

\textbf{In general case} ($\mathbf{z}=\phi(\mathbf{x})$), 
\begin{equation}
\begin{split}
    V(\mathbf{p}(\bm{\epsilon} \mid \mathbf{z}))\le \frac{|\mathcal{Z}|}{4}
\end{split}
\end{equation}
let $m$ denote the number of sample, we get a lower bound of $m$, which is also known as sample complexity.
\begin{equation}
\begin{split}
    m \geq \frac{C}{4} \log (|\mathcal{Y}| / \delta)|\mathcal{Z}| \mathrm{e}^{2}
\end{split}
\end{equation}

\textbf{In ideal case}( $z=pa_y$) $z\perp\epsilon$:
\begin{equation}
\begin{split}
    V(\mathbf{p}(\bm{\epsilon} \mid \mathbf{z}))\le \beta
\end{split}
\end{equation}
\begin{equation}
\begin{split}
    m \geq \frac{C}{4} \log (|\mathcal{Y}| / \delta)|\beta| \mathrm{e}^{2}
\end{split}
\end{equation}

\begin{equation}
    \sqrt{\frac{C \log (|\mathcal{Y}| / \delta) V(\mathbf{p}(\epsilon \mid \mathbf{z}))}{m}} \leq \sqrt{\frac{C \log (|\mathcal{Y}| / \delta)|\mathcal{Z}|}{4m}} \leq 1 / \mathrm{e}
\end{equation}
Then, from the fact that (\cite{shamir2010learning}):
\begin{equation}
\begin{split}
h\left(\sqrt{\frac{\nu}{m}}\right) &=\left(\sqrt{\frac{\nu}{m}} \log \left(\sqrt{\frac{m}{v}}\right)\right) \\
& \leq \frac{\sqrt{v} \log (\sqrt{m})+1 / \mathrm{e}}{\sqrt{m}},
\end{split}
\end{equation}
We can get the upper bound of second summand in Eq.\ref{eq:i_y_z} as follows
\begin{equation}\label{eq:yh}
\begin{split}
    &\sum_y h\left(\sqrt{C\log ((|\mathcal{Y}|) / \delta)} \sqrt{\frac{V(\mathbf{p}(\epsilon \mid z))}{m}}\right)\\
    \le& \frac{\sqrt{C \log (|\mathcal{Y}| / \delta)} \log (m)\left(|\mathcal{Y}|\sqrt{V(\mathbf{p}(\bm{\epsilon} \mid \mathbf{z}))}\right)+\frac{2}{\mathrm{e}}|\mathcal{Y}|}{2 \sqrt{m}}
\end{split}
\end{equation}
\textbf{In general case}:
\begin{equation}
\begin{split}
    Eq.\ref{eq:yh} \le \frac{\sqrt{C \log (|\mathcal{Y}| / \delta)} \log (m)\left(|\mathcal{Y}|\sqrt{\mathcal{|Z|}}\right)+\frac{2}{\mathrm{e}}|\mathcal{Y}|}{2 \sqrt{m}}
\end{split}
\end{equation}

\textbf{In ideal case}:
\begin{equation}
\begin{split}
    Eq.\ref{eq:yh} \le \frac{\sqrt{C \log (|\mathcal{Y}| / \delta)} \log (m)\left(|\mathcal{Y}|\sqrt{\beta}\right)+\frac{2}{\mathrm{e}}|\mathcal{Y}|}{2 \sqrt{m}}
\end{split}
\end{equation}
For the first summand in Eq.\ref{eq:i_y_z}, we follow the fact (\cite{shamir2010learning} Theorem 3) that:
\begin{equation}
\begin{split}
    V(\mathbf{H}(Y \mid \mathbf{z})) \leq \frac{|Z| \log ^{2}(|\mathcal{Y}|)}{4}
\end{split}
\end{equation}
Finally we accomplish the proof of Theorem \ref{thm:sample_complexity}.

\section{Experimental Details} \label{sup:exp_details}
\subsection{Synthetic Datasets}
The synthetic data is generated following the general causal graph Fig.\ref{fig:intro}. We build the simulator  using nonlinear functions refering to \cite{DBLP:conf/kdd/ZouKCC019, yang2021top}. We simulate 500 data for each settings. Let $\kappa_1(\cdot)$ and $\kappa_2(\cdot)$ as piecewise functions, and $\kappa_1(x) = x - 0.5$ if $x>0$, otherwise $\kappa_1(x) = 0$, $\kappa_2(x) = x$ if $x>0$, otherwise $\kappa_2(x) = 0$ and $\kappa_3(x) = x + 0.5$ if $x<0$, otherwise $\kappa_3(x) = 0$. . For the fair evaluation, we set the same dimension for $\mathbf{pa_Y, nd_Y, dc_Y}$ that $d_1=d_2=d_3=5$. The nonlinear systems are:
\begin{equation}
\begin{split}
    &\mathbf{pa_Y} \sim U(-1, 1), \\
    &\bm{\epsilon_1} = \bm{\epsilon_2} = \bm{\epsilon_3}\sim \mathcal{N}(0.3, \beta I)\\
    &\mathbf{nd}_{1} =  \bm{a}^T\kappa_1(\kappa_2([\mathbf{pa_Y}, \bm{\epsilon_2} ]))+{q},\mathbf{nd}_{2} =  \bm{a}^T\kappa_3(\kappa_2([-\mathbf{pa_Y},-\bm{\epsilon_2}]))+{q}, \mathbf{nd_Y} = \sigma(\mathbf{nd}_{1} + \mathbf{nd}_{1}\cdot \mathbf{nd}_{2})\\
    &\mathbf{y}_{1} =  \bm{a}^T\kappa_1(\kappa_2([\mathbf{pa_Y}, \bm{\epsilon_1} ]))+{q},\mathbf{y}_{2} =  \bm{a}^T\kappa_3(\kappa_2([-\mathbf{pa_Y},-\bm{\epsilon_1}]))+{q}, \mathbf{nd_Y} = \mathbb{I}(\sigma(\mathbf{y}_{1} + \mathbf{y}_{1}\cdot \mathbf{y}_{2}))\\
    &\mathbf{dc}_{1} =  \bm{a}^T\kappa_1(\kappa_2([y, \bm{\epsilon_3} ]))+{q},\mathbf{dc}_{2} =  \bm{a}^T\kappa_3(\kappa_2([-y,-\bm{\epsilon_3}]))+{q}, \mathbf{nd_Y} = \sigma(\mathbf{dc}_{1} + \mathbf{dc}_{1}\cdot \mathbf{dc}_{2})\\
    &\mathbf{X} = [\mathbf{pa_Y, nd_Y, dc_Y}]
\end{split}
\end{equation}
where $q=0.3$, $\mathbb{I}(x)$ is an indicator function, which is 1 if $x>0$, and 0 otherwise. From synthetic data, we analyze whether CaRR have the ability of identifying the $\mathbf{pa_Y}$ from mixed observational $\mathbf{X}$.

\subsection{Real-world Datasets}
\textbf{Yahoo! R3} The nonuniform (OOD) set contains samples of users deliberately selected, and rates the songs by preference, which can be considered as a stochastic logging policy. For the uniform (i.i.d.) set, users were asked to rate 10 songs randomly selected by the system. The dataset contains 14,877 users and 1,000 items. The density degree is 0.812\%, which means that the dataset only records 0.812\% of rating pairs.

\textbf{CPC} The dataset contains 85000 samples for training and 15000 samples for validation and test. The recommended item list will be exposed to query of the system by nonuniform recommendation policy. The data includes 29 dimensions of matching features, which includes query and item features. 

\textbf{Coat} The dataset is collected by an online web-shop interface. In training dataset, users were asked to rate 24 coats selected by themselves from 300 item sets. In test dataset, it collects the userrates on 16 random items from 300 item sets. Just as Yahoo! R3,the training dataset is a non-uniform dataset and the test dataset is uniform dataset. The dataset provides side information of both users and item sets. The feature dimension of user/item pair is 14/33.

\textbf{PCIC} The dataset  is
collected from a survey by questionnaires about the rate and reason why the audience like or dislike the movie. Movie features are collected form movie-review pages. The training data is a biased dataset consisting of 1000 users asked to rate the movies they care from 1720 movies. The validation and test set is the user preference on uniformly exposed movies. The density degree is set to be 0.241\%. 

For evaluation, Yahoo! R3 and Coat dataset both have two validation (include test) datasets. The i.i.d. set is 1/3 data from nonuniform logging policy, and OOD set consists of the data generated under a uniform policy. For PCIC dataset, we train our method on non-uniform datasets and perform evaluations on uniform dataset. 

\textbf{CelebA-anno} The dataset contains more than 200K celebrity images, each with 40 attribute annotations. Following the previous work \cite{kocaoglu2017causalgan}, we select 9 attribute annotations, which include Young, Male, Eyeglasses, Bald, Mustache, Smiling, Wearing Lipstick, Mouth Open. Our task is to predict Smiling. $\mathbf{pa_Y}$ including \{Young, Male\}, $\mathbf{nd_Y}$ including \{Eyeglasses, Bald, Mustache, Wearing Lipstick\} and $\mathbf{cd_Y}$ including \{Mouth Open\}. From this dataset, we evaluate the ability of distinguishing $\mathbf{pa_Y}$ from $\mathbf{X}$.

\subsection{Model Architecture and Implementation Details}
The hyper-parameters are determined by grid search.
Specifically, the learning rate and batch size are tuned in the ranges of {{$[10^{-1}, 10^{-2}, 10^{-3}, 10^{-4}]$} and $[64,128,256,512,1024]$}, respectively.
The weighting parameter $\lambda$ is tuned in $[0.001]$. Perturbation degrees are set to be $\beta=\{0.1, 0.2, 0.1, 0.3\}$ for Coat, Yahoo!R3, PCIC and CPC separately. The representation dimension is empirically set as 64. All the experiments are conducted based on a server with a 16-core CPU, 128g memories and an RTX 5000 GPU. The deep model architecture is shown as follows:

(1)Representation learning method $\phi(\mathbf{x})$:
If dataset is Yahoo!R3 or PCIC, in which only user id and item id are the input, we firstly use an embedding layer. The representation function architecture is:
\begin{itemize}
    \item Concat(Embedding(user id, 32), Embedding(item id, 32))
    \item Linear(64, 64), ELU()
    \item Linear(64, representation dim), ELU()
\end{itemize}
Then for the dataset Coat and CPC, the feature dimension is 29 and 47 separately. It do not use embedding layer at first. The representation function architecture is.
\begin{itemize}
    \item Linear(64, 64), ELU()
    \item Linear(64, representation dim), ELU()
\end{itemize}
(2)Downstream Prediction Model $g(\mathbf{z})$:
\begin{itemize}
\item Linear(representation dim, 64), ELU()
\item Linear(64, 2)
\end{itemize}
\begin{figure*}[h]
\begin{center}
\centerline{\includegraphics[width=0.6\columnwidth]{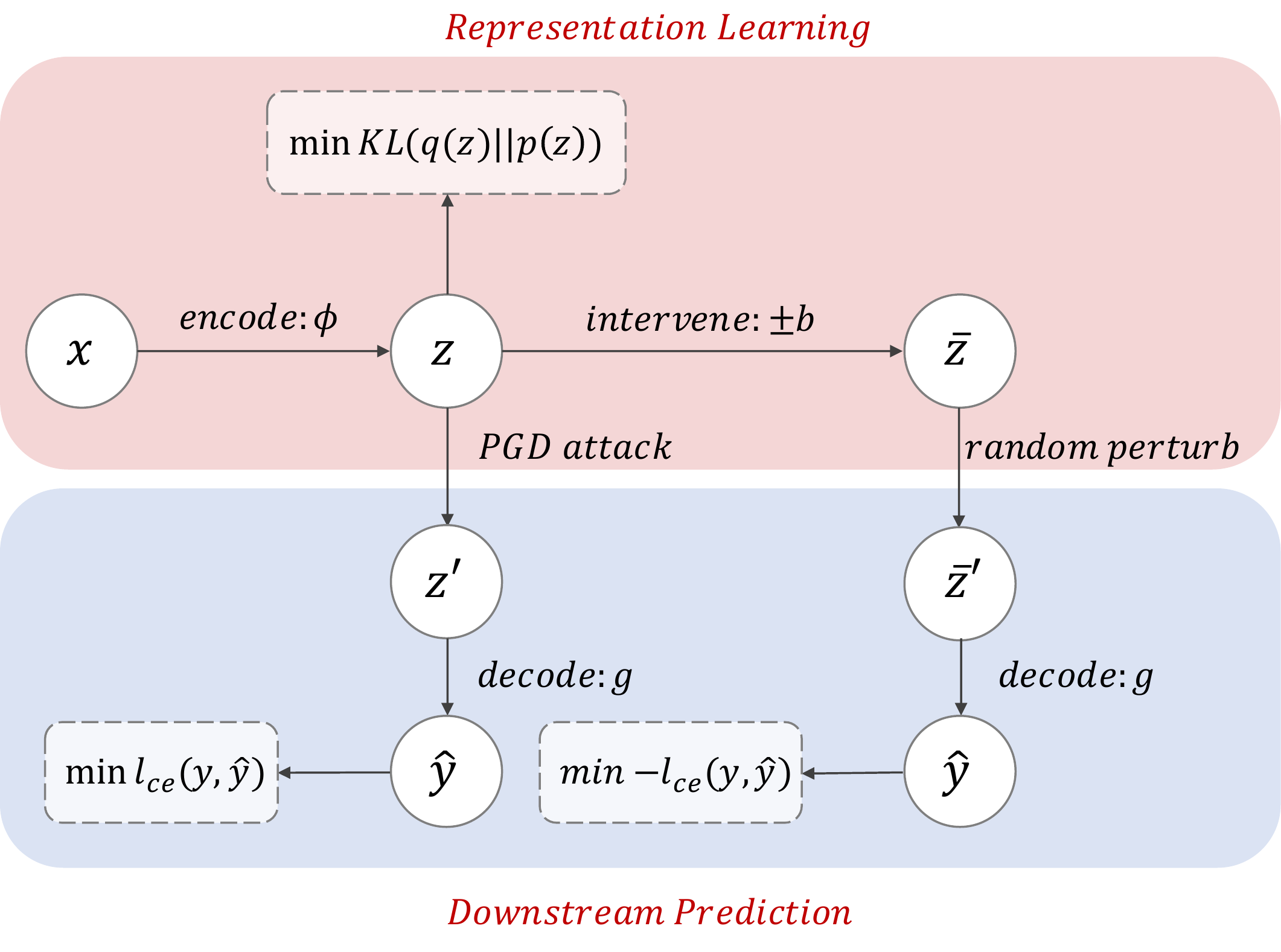}}
\caption{The figure demonstrates the model architecture of CaRR}
\label{fig:archi}
\end{center}
\end{figure*}

The figure shows the model architecture. The model consists of two parts, the representation learning part and downstream prediction part. As illustrated in Section \ref{sec:exp_steup}, our final objective is:
\begin{equation}\label{eq:final_ob}
 \mathbb{E}_{D}[\mathbb{E}_{\mathbf{z'}\in\mathcal{B}(\mathbf{z}, \beta)}[\log p_{g}(\mathbf{y}|\mathbf{z'})]-\lambda\mathcal{D_\text{KL}}(q_{\bm{\phi}} (\mathbf{z}|\mathbf{x})||p_{\bm{\theta}}(\mathbf{z}))-\mathbb{E}_{\mathbf{\hat{z}'}\in\mathcal{B}(\mathbf{\hat{z}}, \beta)}[\log p_{g}(\mathbf{y}|\mathbf{\hat{z}'})]]
\end{equation}
For the representation learning part, we firstly use encode function $\phi(\cdot)$ to get representation $\mathbf{z}$ and get the intervened $\mathbf{\hat{z}}$. Then we perturb the learned $\mathbf{z}$ by PGD attack procedure and perturb the $\mathbf{\hat{z}}$ by random perturbation to find the worst case correspnding to the worst downstream loss. Finally we put $\mathbf{z}'$ and $\mathbf{\hat{z}}'$ into the downstream prediction model $g(\cdot)$ to calculate $y$. The likelihood in Eq.\ref{eq:final_ob} is estimated by cross entropy loss.
Note that the perturbation approach would block the gradient propagation between representation learning process and downstream prediction by some implementation ways. Thus we use the conditional Gaussian prior $p_\theta(\mathbf{z}) = \mathcal{N}(y\mathbf{1}, \mathbf{I})$ rather than standard Gaussian distribution $p_\theta(\mathbf{z}) = \mathcal{N}(\mathbf{0}, \mathbf{I})$ to calculate KL term. If gradient propagation is blocked, by using conditional prior, the learning process of representation $\mathbf{z}$ and exogenous $\bm{\epsilon}$ embedded in $\mathbf{z}'$ will not be influenced. The form of conditional Gaussian prior is more general $p_\theta(\mathbf{z}) = \mathcal{N}(\zeta(y), \mathbf{I})$, where $\zeta(\cdot)$ could be any non trivial function like linear function even neural network.

\section{Additional Results} \label{sup:exp_results}
Due to the page limit in main text, we present the additional test results and analysis in this section. Table \ref{tab:overall_yahoo} and \ref{tab:coat_res} shows overall experimental results on Coat, The table contains both i.i.d. and OOD setting. Based on which we find that in most cases, our method  achieves better performance in terms of AUC and ACC, compared to base methods.  The results in \ref{tab:overall_yahoo} show that the robust learning process with exogenous variables involved  enhances the adversarial performance on perturbed samples. On the other hand, in standard training mode, CaRR achieves better adversarial performance than baselines including base method and IB. Although the robust training deteriorates the performance of on normal dataset, it will help to identify the causal representation, which benefits downstream prediction under adversarial attack. For instance, we find that standard training of CaRR on PCIC has an AUC of 64.47\%, which is better than the performance under robust training (63.9\%). But contrary conclusions are drawn on adversarial performance. The result supports that causal representation we learned is more robust. The performance of base method in robust training mode is worst in most of cases, indicating that robust training process will largely influence the learning of the model and ruin the prediction model. Although the robust training deteriorates the performance of on normal dataset, it will help to identify the causal representation, which benefits downstream prediction under adversarial attack.

Fig.\ref{fig:epsilon_result} demonstrates how robust training  degree ($\beta=\{0.1, 0.3, 0.5, 0.7, 1.0\}$) influences the downstream prediction under adversarial settings. We conduct the experiments on the attacked real-world dataset by PGD attacker. From Fig.\ref{fig:epsilon_result}, we find that our method is better than base method, because the base model's ability on standard prediction is broken by adversarial training. When $\beta$ is small, our method behaves closely to the r-CVAE in all the datasets. When $\beta$ gets larger, the difference between performance of CaRR and that of r-CVAE continuously enlarges in Yahoo!R3. In PCIC, the gap becomes the largest among all when $\beta=0.5$, and narrows down to 0 when $\beta=0.7$.  This is because in our framework, we explicitly deploy a model to achieve more robust representations, while others fail. 

Fig. \ref{fig:6} and \ref{fig:7} compare the distance correlation metric given by the training under standard and robust mode. It shows that our method performs consistently better compared with base methods in both modes, with a higher distance correlation, under smaller variance. The gap is obvious especially in the learning of parental information, which is the main focus of our approaches.

Fig. \ref{fig:8} and \ref{fig:9} record the results along optimization process and until convergence, under different settings of the pertubation degree $\beta$, considering the dataset CelebA-anno. The annotation smile is used as the label to be prediced, and other features are the source data.  It shows that when the optimization process is not finished, both approaches have similar performance, with unstability evidenced by large variance of the DC metric. However, our method outperforms the baseline when the optimization converges, owning a higher DC with smaller variations.  The results also show that $\beta$ is an important factor for training the model. Larger $\beta$ often leads to higher variance of the training of the model.

\begin{table*}[h]
    \center
\caption{Overall Results on Coat dataset.}
\renewcommand\arraystretch{1.1}
\setlength{\tabcolsep}{3.3pt}
\begin{threeparttable}  
\scalebox{.95}{
    \begin{tabular}{c|c|cc|cc|cc|cc}
    \hline\hline
    Dataset& Method&\multicolumn{4}{c|}{p=$\infty$}&\multicolumn{4}{c}{p=2} \\ \hline
        & Metrics & AUC & ACC & advAUC & advACC & AUC & ACC & advAUC & advACC \\ \hline
        \multirow{7}{*}{Coat-OOD}&base(robust) & 0.5586 & 0.5569 & 0.5479 & 0.5451 & 0.5593 & 0.556 & 0.5441 & 0.5412 \\
        &base(standard) & 0.5659 & 0.5724 & 0.3874 & 0.4024 & 0.5642 & 0.5687 & 0.3128 & 0.3317 \\ 
        &IB(standard) & 0.5659 & 0.5681 & 0.4701 & 0.4796 & 0.5659 & 0.5713 & 0.5442 & 0.5495 \\ 
        &r-CVAE(robust) & 0.5629 & 0.5586 & 0.559 & 0.5544 & 0.5634 & 0.5591 & 0.5572 & 0.5522 \\ 
        &r-CVAE(standard) & 0.5656 & 0.5643 & 0.5527 & 0.5478 & 0.5671 & 0.5649 & 0.5586 & 0.554 \\ 
        &CaRR(robust) & \textbf{0.5707} & {0.5681} & \textbf{0.5653} & \textbf{0.5659} & {0.5705} & {0.5675} & \textbf{0.5674} & \textbf{0.565} \\ 
        &CaRR(standard) & 0.5705 & \textbf{0.5718} & 0.5643 & 0.5659 & \textbf{0.5725} & \textbf{0.5732} & 0.5608 & 0.5601 \\ 
        \hline
        \multirow{7}{*}{Coat-i.i.d.}&base(robust) & 0.7156 & 0.7232 & 0.7034 & 0.7107 & 0.7195 & 0.7261 & 0.7001 & 0.7057 \\ 
        &base(standard) & 0.7191 & 0.7217 & 0.4911 & 0.487 & 0.7235 & 0.7255 & 0.3642 & 0.3515 \\ 
        &IB(standard) & 0.7162 & 0.72 & 0.6023 & 0.6017 & 0.7182 & 0.7222 & 0.694 & 0.696 \\ 
        &r-CVAE(robust) & 0.7147 & 0.7222 & 0.7105 & 0.7181 & 0.7087 & 0.7169 & 0.7058 & 0.7141 \\ 
        &r-CVAE(standard) & 0.7106 & 0.7184 & 0.7029 & 0.7106 & 0.7129 & 0.7206 & 0.7023 & 0.7059 \\ 
        &CaRR(robust) & {0.7276} & 0.7339 & \textbf{0.7208} & \textbf{0.727} & \textbf{0.7265} & \textbf{0.7331} & \textbf{0.7196} & \textbf{0.7261} \\ 
        &CaRR(standard) & \textbf{0.7283} & \textbf{0.7355} & 0.7125 & 0.7196 & 0.7248 & 0.7305 & 0.7069 & 0.7125 \\ 
        
        \hline\hline
    \end{tabular}
}   
\end{threeparttable}    
\label{tab:coat_res}
\end{table*}

\begin{figure*}[h]
\begin{center}
\centerline{\includegraphics[width=1\columnwidth]{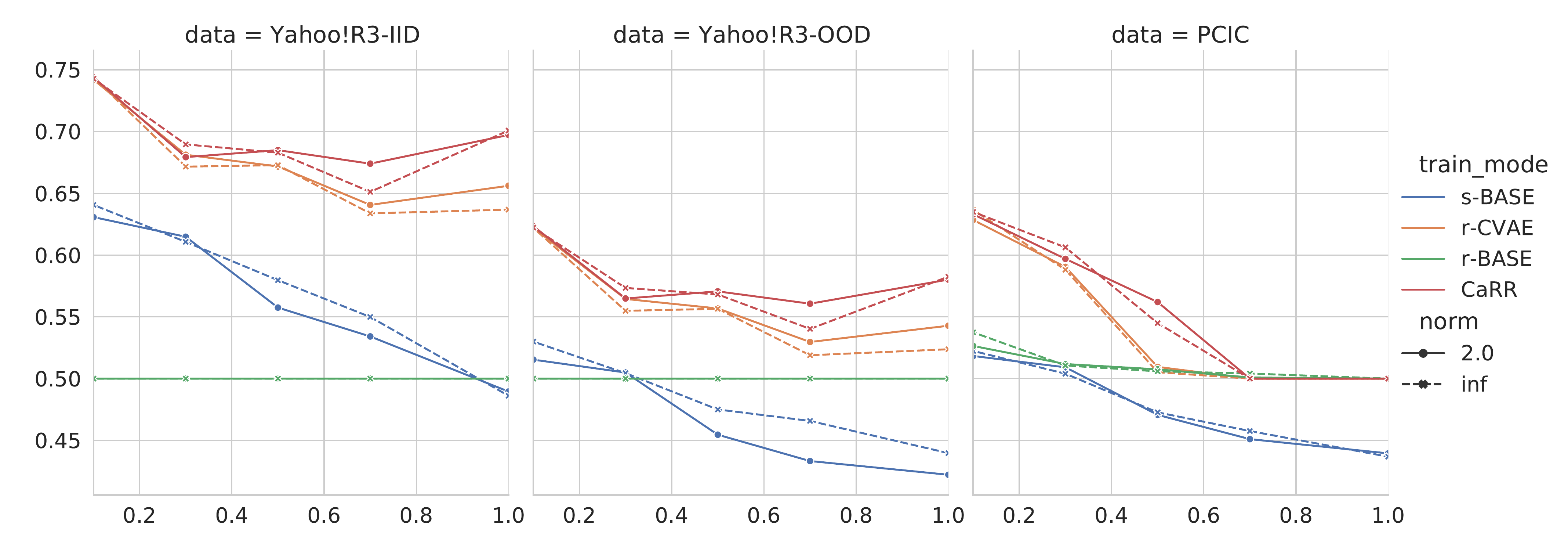}}
\caption{ Results under  different adversarial perturbations $\beta$ on three datasets. Axis-x is the attack degree $\beta$. Axis-y is the adv-AUC under attacked test datasets.}
\label{fig:epsilon_result}
\end{center}
\end{figure*}

        

\begin{table*}[h]
    \center
\caption{Additional overall results with standard error.}
\renewcommand\arraystretch{1.1}
\setlength{\tabcolsep}{3.3pt}
\begin{threeparttable}
\scalebox{.95}{
    \begin{tabular}{c|c|c|c|cc|cc|cc|cc}
    \hline\hline
dataset &     &     &      & AUC    & std    & ACC    & std    & adv\_AUC & std    & adv\_ACC & std    \\
\hline
\multirow{8}{*}{PCIC} & \multirow{4}{*}{standard} & \multirow{2}{*}{p=2}   & CaRR & 0.6416 & 0.0078 & 0.6803 & 0.0014 & 0.619    & 0.004  & 0.6625   & 0.0041 \\
         & &     & r-CVAE & 0.6328 & 0.0023 & 0.6725 & 0.0042 & 0.5893   & 0.0419 & 0.6429   & 0.0201 \\
         & &  \multirow{2}{*}{p=$\infty$} & CaRR & 0.6447 & 0.0041 & 0.6817 & 0.0043 & 0.6148   & 0.011  & 0.664    & 0.0104 \\
         & &     & r-CVAE & 0.6358 & 0.014  & 0.6779 & 0.0066 & 0.6138   & 0.0062 & 0.6601   & 0.0048 \\
&\multirow{4}{*}{robust}   & \multirow{2}{*}{p=2}   & CaRR & 0.6363 & 0.0045 & 0.6709 & 0.0042 & 0.6332   & 0.0024 & 0.6576   & 0.0006 \\
         & &     & r-CVAE & 0.63   & 0.0075 & 0.674  & 0.0069 & 0.6187   & 0.0051 & 0.6493   & 0.0013 \\
         & & \multirow{2}{*}{p=$\infty$} & CaRR & 0.639  & 0.007  & 0.6761 & 0.0024 & 0.6225   & 0.0057 & 0.6638   & 0.001  \\
         & &     & r-CVAE & 0.6363 & 0.0066 & 0.6733 & 0.0058 & 0.6088   & 0.0098 & 0.6596   & 0.0124\\
         \hline
         \multirow{8}{*}{Yahoo!R3 OOD}&\multirow{4}{*}{standard} & \multirow{2}{*}{p=2}   & CaRR & 0.6276 & 0.0001 & 0.6255 & 0.0022 & 0.5917 & 0.0071 & 0.5917 & 0.0072 \\
         & &     & r-CVAE & 0.6233 & 0.0005 & 0.6243 & 0.002  & 0.5865 & 0.0022 & 0.5872 & 0.0025 \\
         
         & &  \multirow{2}{*}{p=$\infty$} & CaRR & 0.629  & 0.0011 & 0.6257 & 0.0002 & 0.5966 & 0.0049 & 0.5965 & 0.0042 \\
         & &     & r-CVAE & 0.6253 & 0.0023 & 0.6249 & 0.0014 & 0.5855 & 0.0016 & 0.5863 & 0.0019 \\
& \multirow{4}{*}{robust}   & \multirow{2}{*}{p=2}   & CaRR & 0.6242 & 0.0009 & 0.6307 & 0.0012 & 0.6008 & 0.0009 & 0.601  & 0.0016  \\
         & &     & r-CVAE & 0.6191 & 0.0013 & 0.6241 & 0.0051 & 0.5882 & 0.0014 & 0.5907 & 0.0009 \\
         & & \multirow{2}{*}{p=$\infty$} & CaRR & 0.6238 & 0.0011 & 0.6284 & 0.0017 & 0.5993 & 0.0019 & 0.5999 & 0.0026  \\
         & &     & r-CVAE & 0.6186 & 0.001  & 0.6235 & 0.0028 & 0.5886 & 0.0014 & 0.5912 & 0.0012\\
         \hline
         \multirow{8}{*}{Yahoo!R3 i.i.d.}&\multirow{4}{*}{standard} & \multirow{2}{*}{p=2}   & CaRR & 0.7493 & 0.0004 & 0.7495 & 0.0015 & 0.7188 & 0.0015 & 0.7072 & 0.0013 \\
         & &     & r-CVAE & 0.7487 & 0.0001 & 0.7529 & 0.0027 & 0.7202 & 0.0029 & 0.7099 & 0.0027 \\
         
         & &  \multirow{2}{*}{p=$\infty$} & CaRR & 0.7497 & 0.0004 & 0.7503 & 0.0019 & 0.7191 & 0.0023 & 0.7099 & 0.0026 \\
         & &     & r-CVAE & 0.7488 & 0.0001 & 0.7515 & 0.0008 & 0.7191 & 0.0021 & 0.7072 & 0.0015 \\
& \multirow{4}{*}{robust}   & \multirow{2}{*}{p=2}   & CaRR & 0.7374 & 0.0024 & 0.7158 & 0.0061 & 0.7247 & 0.0026 & 0.7159 & 0.0036  \\
         & &     & r-CVAE & 0.7376 & 0.0018 & 0.7151 & 0.0045 & 0.7194 & 0.0020 & 0.7082 & 0.0021 \\
         & & \multirow{2}{*}{p=$\infty$} & CaRR & 0.7378 & 0.0015 & 0.7168 & 0.0015 & 0.7210 & 0.0031 & 0.7107 & 0.0040  \\
         & &     & r-CVAE & 0.7341 & 0.0007 & 0.7093 & 0.0035 & 0.7180 & 0.0017 & 0.7080 & 0.0016\\
         \hline
         \multirow{8}{*}{Coat OOD}&\multirow{4}{*}{standard} & \multirow{2}{*}{p=2}   & CaRR & 0.5725 & 0.0005 & 0.5732 & 0.0005 & 0.5608 & 0.0003 & 0.5601 & 0.0004 \\
         & &     & r-CVAE & 0.5671 & 0.0005 & 0.5649 & 0.0006 & 0.5586 & 0.0002 & 0.554  & 0.0001 \\
         
         & &  \multirow{2}{*}{p=$\infty$} & CaRR & 0.5705 & 0.0013 & 0.5718 & 0.0017 & 0.5643 & 0.0001 & 0.5659 & 0.0006 \\
         & &     & r-CVAE & 0.5656 & 0.0005 & 0.5643 & 0.0007 & 0.5527 & 0.0074 & 0.5478 & 0.0081 \\
& \multirow{4}{*}{robust}   & \multirow{2}{*}{p=2}   & CaRR & 0.5705 & 0.0015 & 0.5675 & 0.0015 & 0.5674 & 0.0002 & 0.565  & 0.0012  \\
         & &     & r-CVAE & 0.5634 & 0.0014 & 0.5591 & 0.0018 & 0.5572 & 0.0009 & 0.5522 & 0.0003 \\
         & & \multirow{2}{*}{p=$\infty$} & CaRR & 0.5707 & 0.0017 & 0.5681 & 0.0024 & 0.5653 & 0.0019 & 0.5659 & 0.0011  \\
         & &     & r-CVAE & 0.5629 & 0.0017 & 0.5586 & 0.0028 & 0.559  & 0.0004 & 0.5544 & 0.0007\\
         \hline
         \multirow{8}{*}{Coat i.i.d.}&\multirow{4}{*}{standard} & \multirow{2}{*}{p=2}   & CaRR & 0.7248 & 0.0011 & 0.7305 & 0.0016 & 0.7069 & 0.0023 & 0.7125 & 0.0036 \\
         & &     & r-CVAE & 0.7129 & 0.0009 & 0.7206 & 0.0022 & 0.7023 & 0.0041 & 0.7059 & 0.0061 \\
         
         & &  \multirow{2}{*}{p=$\infty$} & CaRR & 0.7283 & 0.0013 & 0.7355 & 0.0015 & 0.7125 & 0.0007 & 0.7196 & 0.001 \\
         & &     & r-CVAE & 0.7106 & 0.0029 & 0.7184 & 0.0033 & 0.7029 & 0.0008 & 0.7106 & 0.0094 \\
& \multirow{4}{*}{robust}   & \multirow{2}{*}{p=2}   & CaRR & 0.7265 & 0.0032 & 0.7331 & 0.0027 & 0.7196 & 0.0046 & 0.7261 & 0.0042  \\
         & &     & r-CVAE & 0.7087 & 0.0005 & 0.7169 & 0.0016 & 0.7058 & 0.002  & 0.7141 & 0.0036 \\
         & & \multirow{2}{*}{p=$\infty$} & CaRR & 0.7276 & 0.0028 & 0.7339 & 0.002  & 0.7208 & 0.0023 & 0.727  & 0.0019  \\
         & &     & r-CVAE & 0.7147 & 0.0023 & 0.7222 & 0.0026 & 0.7105 & 0.0039 & 0.7181 & 0.0043 \\
         \hline\hline
    \end{tabular}
}   
\end{threeparttable}    
\label{tab:pcic_std}

\end{table*}

\begin{figure*}[h]
\begin{center}
\centerline{\includegraphics[width=1\columnwidth]{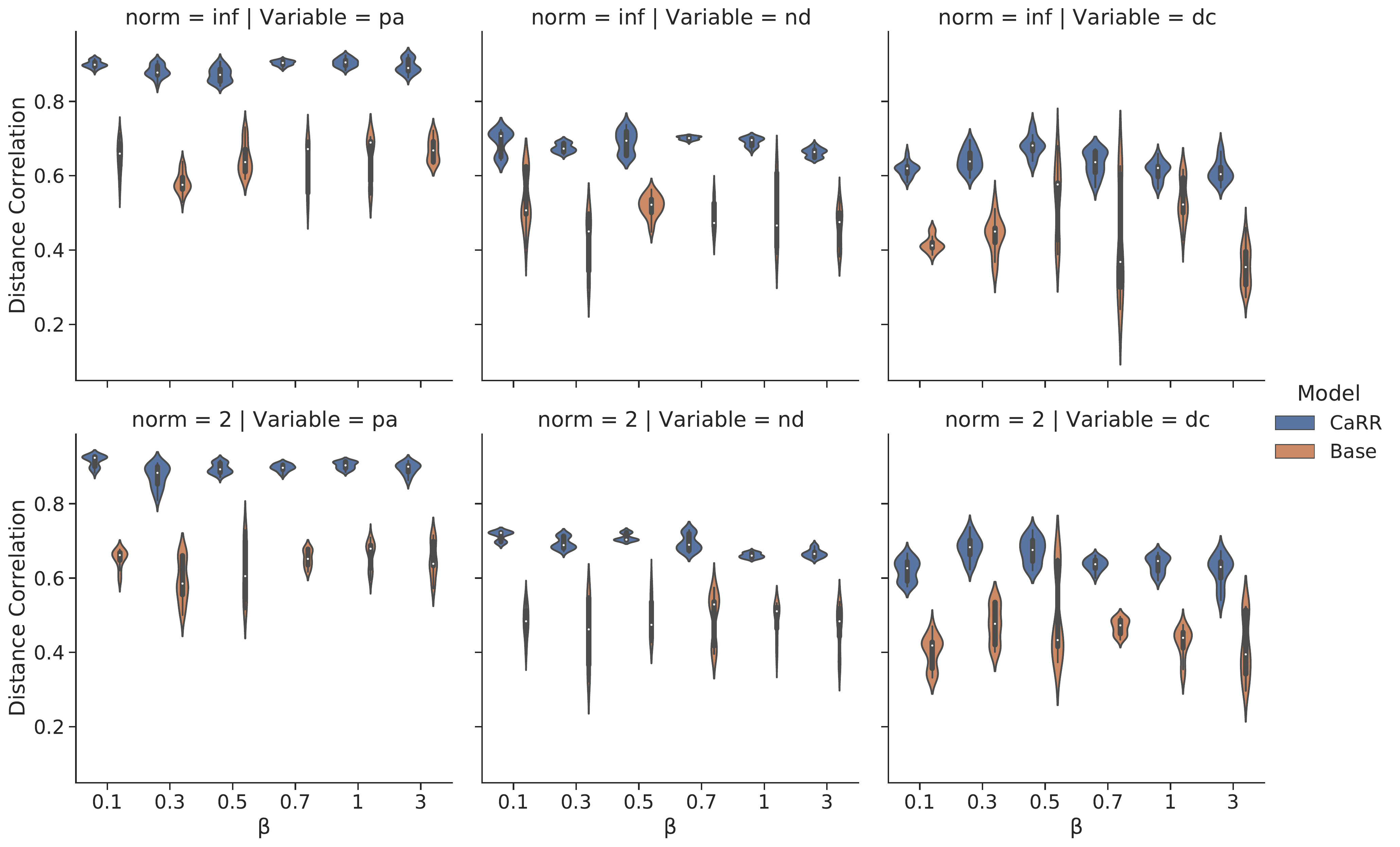}}
\caption{Identify results on synthetic dataset over different range of $\beta$ under robust training.}
\label{fig:6}
\end{center}
\end{figure*}

\begin{figure*}[h]
\begin{center}
\centerline{\includegraphics[width=1\columnwidth]{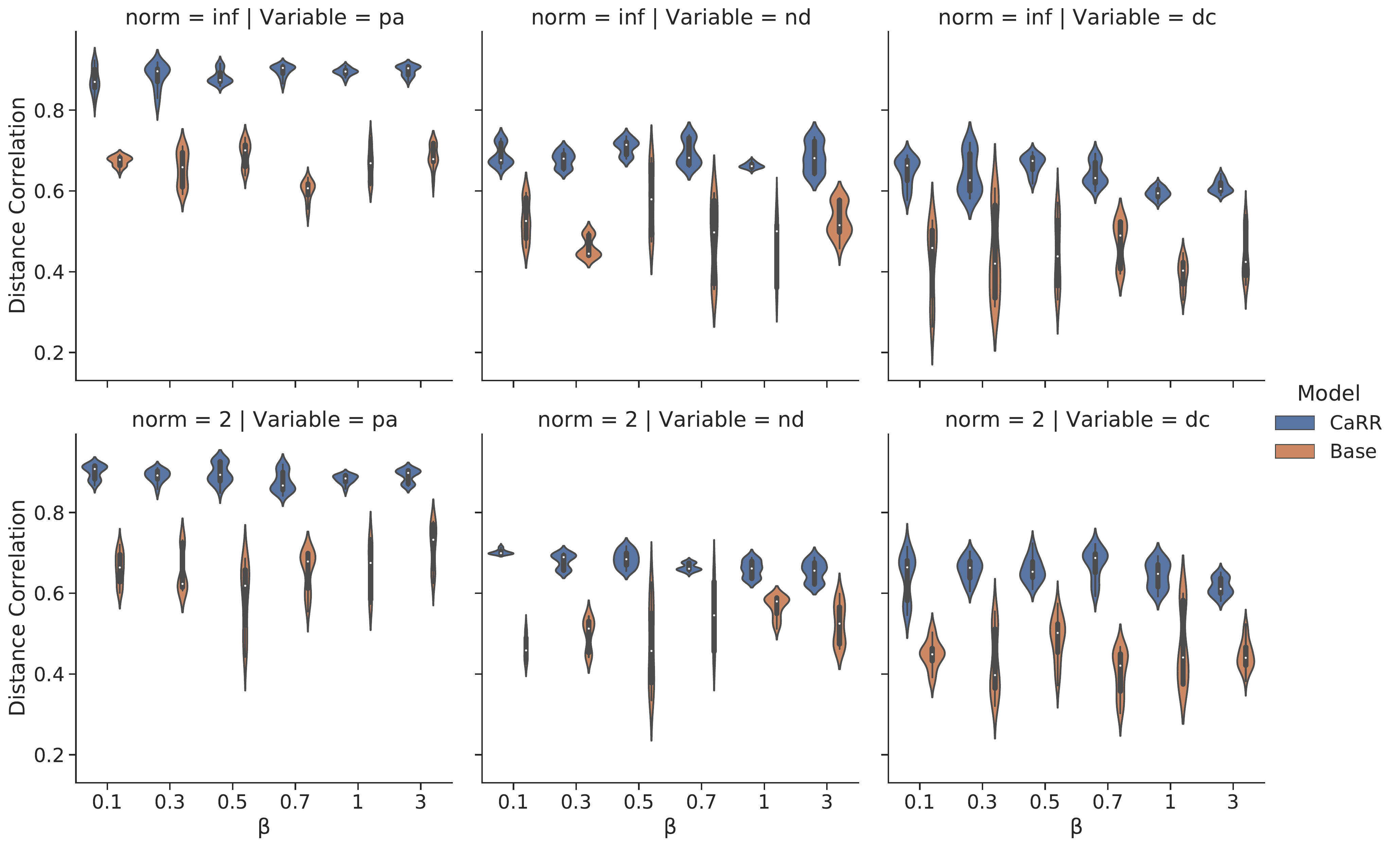}}
\caption{Identify results on synthetic dataset over different range of $\beta$ under standard training.}
\label{fig:7}
\end{center}
\end{figure*}

\begin{figure*}[h]
\begin{center}
\centerline{\includegraphics[width=1\columnwidth]{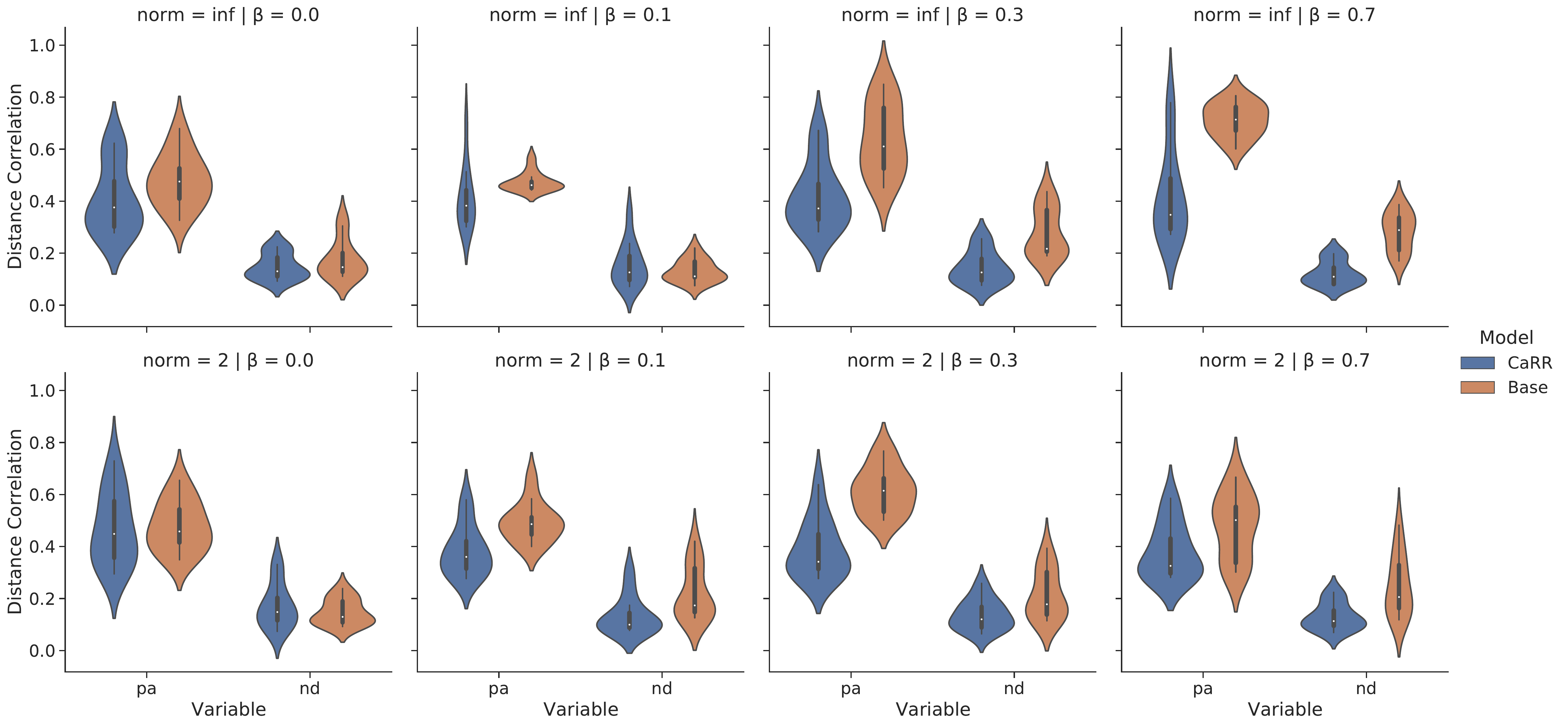}}
\caption{Identify results on CelebA-anno dataset over different range of $\beta$ during early optimization step.}
\label{fig:8}
\end{center}
\end{figure*}

\begin{figure*}[h]
\begin{center}
\centerline{\includegraphics[width=1\columnwidth]{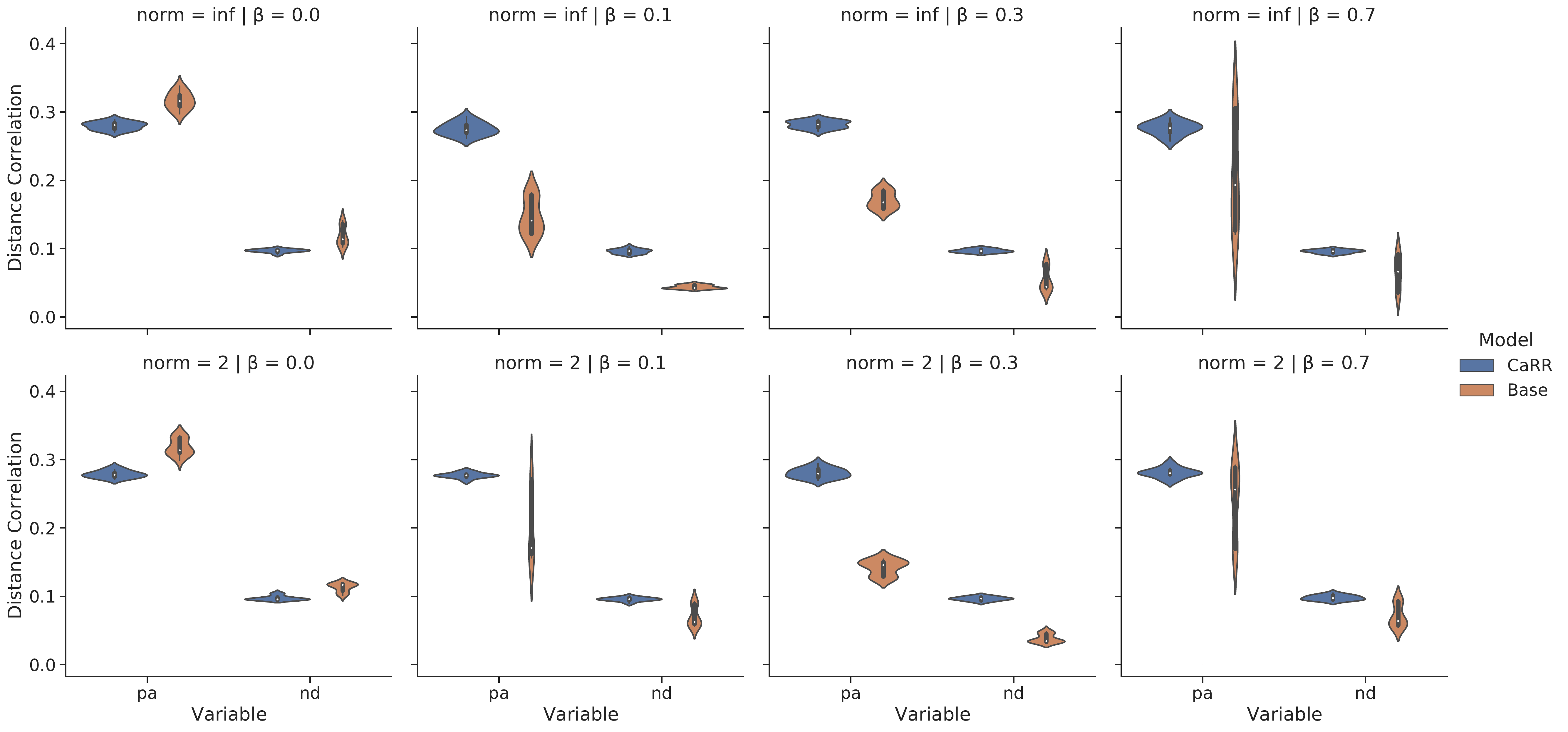}}
\caption{Identify results on CelebA-anno dataset over different range of $\beta$ after converging.}
\label{fig:9}
\end{center}
\end{figure*}
\end{document}